% This file was adapted from ICLR2022_conference.tex example provided for the ICLR conference
\documentclass{article} % For LaTeX2e
\usepackage{collas2024_conference,times}
\usepackage{easyReview}

% Optional math commands from https://github.com/goodfeli/dlbook_notation.
% \input{math_commands.tex}
\usepackage{graphicx}
\usepackage{booktabs}
\usepackage{multirow}
\usepackage{comment}
\usepackage{algorithm}
\usepackage{algorithmic}
\usepackage{wrapfig,lipsum,booktabs}
\usepackage{float}
\usepackage{subfigure}
\usepackage{capt-of}
\usepackage{mylatexstyle}
%%%%% NEW MATH DEFINITIONS %%%%%

\usepackage{amsmath,amsfonts,bm}

% Mark sections of captions for referring to divisions of figures

% Highlight a newly defined term

% Figure reference, lower-case.

% Figure reference, capital. For start of sentence

% Section reference, lower-case.

% Section reference, capital.

% Reference to two sections.

% Reference to three sections.

% Reference to an equation, lower-case.
\def\eqref#1{equation~\ref{#1}}
% Reference to an equation, upper case

% A raw reference to an equation---avoid using if possible

% Reference to a chapter, lower-case.

% Reference to an equation, upper case.

% Reference to a range of chapters

% Reference to an algorithm, lower-case.

% Reference to an algorithm, upper case.

% Reference to a part, lower case

% Reference to a part, upper case

\def\1{\bm{1}}

% Random variables

% rm is already a command, just don't name any random variables m

% Random vectors

% Elements of random vectors

% Random matrices

% Elements of random matrices

% Vectors

% Elements of vectors

% Matrix

% Tensor
\DeclareMathAlphabet{\mathsfit}{\encodingdefault}{\sfdefault}{m}{sl}
\SetMathAlphabet{\mathsfit}{bold}{\encodingdefault}{\sfdefault}{bx}{n}

% Graph

% Sets

% Don't use a set called E, because this would be the same as our symbol
% for expectation.

% Entries of a matrix

% entries of a tensor
% Same font as tensor, without \bm wrapper

% The true underlying data generating distribution

% The empirical distribution defined by the training set

% The model distribution

% Stochastic autoencoder distributions

 % Laplace distribution

\newcommand{\R}{\mathbb{R}}

% \newcommand{\Var}{\mathrm{Var}}

% \newcommand{\Cov}{\mathrm{Cov}}
% Wolfram Mathworld says $L^2$ is for function spaces and $\ell^2$ is for vectors
% But then they seem to use $L^2$ for vectors throughout the site, and so does
% wikipedia.

 % See usage in notation.tex. Chosen to match Daphne's book.

% \DeclareMathOperator*{\argmax}{arg\,max}
% \DeclareMathOperator*{\argmin}{arg\,min}

% \DeclareMathOperator{\sign}{sign}

% package
\usepackage{dsfont}
\usepackage{amsthm}

% theorem style and operation
% \theoremstyle{plain}
% \newtheorem{assumption}{Assumption}
% \newtheorem{definition}{Definition}
% \newtheorem{proposition}{Proposition}
% \newtheorem{theorem}{Theorem}
% \newtheorem{lemma}{Lemma}
% \newtheorem{corollary}[theorem]{Corollary}
% \newtheorem{remark}{Remark}

%math
% \newcommand{\natural}[0]{\mathds{N}}

% notation
% input

\newcommand{\vertexSet}[0]{\mathcal{V}}

%mathematical objective

% data

%probability
\newcommand{\prob}[0]{\mathbb{P}}

\newcommand{\expect}[0]{\mathds{E}}

% dimension

% space

% function related
\newcommand{\loss}[0]{{L}}
\newcommand{\gnnModel}[0]{g}

%graph
\newcommand{\graphStruct}[0]{\mathcal{G}}
\newcommand{\neighbor}[1]{\mathcal{N}_{#1}}

\newcommand{\ego}[1]{\mathbf{G}_{#1}}

% metric

%continual learning
\newcommand{\task}[0]{\tau}

\newcommand{\cataForget}[0]{\textbf{CF}}
\newcommand{\hypothesis}[0]{\mathcal{F}}

%local joint distribution

\newcommand{\jointDistCondVertSet}[2]{\prob(\mathbf{y}_{#1},\ego{#1}|#2)}

\newcommand{\experienceBuffer}[0]{\mathcal{P}}

%%%%%%%%%%%%%%%%%%%%%%%%%%%%%%%%
% THEOREMS
%%%%%%%%%%%%%%%%%%%%%%%%%%%%%%%%
% \newtheorem{theorem}{Theorem}[section]
% \newtheorem{proposition}[theorem]{Proposition}
% \newtheorem{lemma}[theorem]{Lemma}
\newtheorem{corollary}[theorem]{Corollary}

\usepackage{hyperref}
\usepackage{url}
\usepackage{enumitem}

% Please leave these options as they are
\usepackage{hyperref}
\hypersetup{
    colorlinks=true,
    linkcolor=red,
    filecolor=magenta,
    urlcolor=blue,
    citecolor=purple,
    pdftitle={Overleaf Example},
    pdfpagemode=FullScreen,
}

\title{On the Limitation and Experience Replay for GNNs in Continual Learning}

% Authors must not appear in the submitted version. They should be hidden
% as long as the \collasfinalcopy macro remains commented out below.
% Non-anonymous submissions will be rejected without review.

\author{Junwei Su\\
Department of Computer Science\\
The University of Hong Kong\\
Hong Kong \\
\texttt{jwsu@cs.hku.hk} \\
\And % Use And to have authors side by side
Difan Zou  \\
Department of Computer Science\\
The University of Hong Kong\\
Hong Kong \\
\texttt{dzou@cs.hku.hk} \\
\AND % Use AND to have authors block one under the other
Chuan Wu \\
Department of Computer Science\\
The University of Hong Kong\\
Hong Kong \\
\texttt{cwu@cs.hku.hk} \\
}

% The \author macro works with any number of authors. There are two commands
% used to separate the names and addresses of multiple authors: \And and \AND.
%
% Using \And between authors leaves it to \LaTeX{} to determine where to break
% the lines. Using \AND forces a linebreak at that point. So, if \LaTeX{}
% puts 3 of 4 authors names on the first line, and the last on the second
% line, try using \AND instead of \And before the third author name.

\collasfinalcopy % Uncomment for camera-ready version, but NOT for submission.

%\preprintcopy % Uncomment for the preprint version, but NOT for submission.

\begin{document}

\maketitle

\begin{abstract}
Continual learning seeks to empower models to progressively acquire information from a sequence of tasks. This approach is crucial for many real-world systems, which are dynamic and evolve over time. Recent research has witnessed a surge in the exploration of Graph Neural Networks (GNN) in Node-wise Graph Continual Learning (NGCL), a practical yet challenging paradigm involving the continual training of a GNN on node-related tasks.  Despite recent advancements in continual learning strategies for GNNs in NGCL, a thorough theoretical understanding, especially regarding its learnability, is lacking. Learnability concerns the existence of a learning algorithm that can produce a good candidate model from the hypothesis/weight space, which is crucial for model selection in NGCL development. This paper introduces the first theoretical exploration of the learnability of GNN in NGCL, revealing that learnability is heavily influenced by structural shifts due to the interconnected nature of graph data. Specifically,  GNNs may not be viable for NGCL under significant structural changes, emphasizing the need to manage structural shifts.  To mitigate the impact of structural shifts, we propose a novel experience replay method termed Structure-Evolution-Aware Experience Replay (SEA-ER). SEA-ER features an innovative experience selection strategy that capitalizes on the topological awareness of GNNs, alongside a unique replay strategy that employs structural alignment to effectively counter catastrophic forgetting and diminish the impact of structural shifts on GNNs in NGCL. Our extensive experiments validate our theoretical insights and the effectiveness of SEA-ER. 
\end{abstract}

%what is incremental learning
%why is it important to study incremental learning on graph
%what is the current state

\section{Introduction}\label{sec:introduc.}
Continual learning, also known as incremental learning or life-long learning, investigates machine learning approaches that enable a model to continuously acquire new knowledge while retaining previously obtained knowledge. This capability is essential as it allows the model to adapt to new information without forgetting past information. In the general formulation of continual learning, a stream of tasks arrives sequentially, and the model undergoes rounds of training sessions to accumulate knowledge for specific objectives, such as classification. The primary goal is to find a learning algorithm that can continually update the model's parameters based on the new task without suffering from \emph{catastrophic forgetting}~\citep{cl_survey}, which refers to the inability to retain previously learned information when learning new tasks. Continual learning is crucial for the practicality of machine learning systems as it enables the model to adapt to new information without the need for frequent retraining, which can be costly in the deep learning regime today.

On the other hand, graph neural networks (GNNs) have gained widespread popularity as effective tools for modeling graph and relational data structures~\citep{gnn_survey,su2024bg}. However, most studies on GNNs have predominantly focused on static settings, assuming that the underlying graph and learning task remain unchanged. This approach, while valuable for certain applications, does not reflect the dynamic and evolving nature of real-life networks. For example, networks such as citation networks~\citep{zliobaite2010learning} and financial networks~\citep{gama2014survey} naturally evolve over time, accompanied by the emergence of new tasks. Therefore, to further advance the practicality and effectiveness of GNNs, it is essential to study their effectiveness and applicability in graph continual learning (GCL).

Due to its practical significance, recent research has seen an upsurge in the investigation of the application of GNNs in GCL~\citep{febrinanto2023graph, yuan2023continual}. Nonetheless, there is a noticeable gap in the theoretical foundations, particularly regarding the node-level prediction task, commonly referred to as \emph{Node-wise Graph Continual Learning} (NGCL) \citep{su2023towards, zhang2022cglb}. In NGCL, the GNN model undergoes multiple training rounds (tasks) to accumulate knowledge for specific objectives, such as node classification. In this scenario, vertices across different tasks can be interconnected, and the introduction of new tasks may alter the structure of the existing graph, as illustrated in Figure~\ref{fig:NGCL}. Given that GNNs rely on graph structures as input, these structural changes can lead to shifts in the data distributions of previous tasks, a phenomenon known as \emph{structural shift}. \cite{su2023towards} has shown that the risk (upper bound) of the performance of GNNs in NGCL is closely related to structural shift. Nevertheless, it is not yet clear whether and how structural shifts can affect the learnability of GNNs in NGCL. Learnability concerns whether there exists a learning algorithm that can produce a good candidate model from the hypothesis/weight space, and the learnability of GNN in NGCL is crucial to NGCL development. This has motivated the central question of this study:
\begin{center}
\emph{Is GNN always learnable (a suitable model choice) for NGCL under structural shift?}
\end{center}

In this work, we address the aforementioned question by formally formulating and analyzing the learnability of GNNs in NGCL. To the best of our knowledge, this work presents the first theoretical studies on the learnability of GNNs in NGCL, and our first contribution is highlighted as follows:

\begin{itemize}[leftmargin=*]
% \begin{itemize}
\item   We present a mathematical formulation for studying the learnability of GNNs in NGCL. This formulation is important for a rigorous exploration of the learnability problem. In particular, we prove that a large structural shift can render GNNs unlearnable in NGCL, as articulated in Theorem~\ref{thm:divergence_necessity}. This implies that GNNs may not be a suitable choice for NGCL in volatile network systems (experiencing rapid changes in graph structure), answering the central question of this study. Moreover, this insight is particularly valuable for NGCL development in practice, as it can guide the model selection process by providing insights on when to select or rule out GNN-like model architectures for NGCL.
% This suggests that GNN is not a suitable neural model for NGCL on volatile network systems (undergoing rapid change in graph structure), answering the central question of this study. Furthermore, this insight is particularly important in the development of NGCL in practice, as it can guide the model selection process by giving insight on when to select or rule out GNNs-like model architecture for NGCL.
\end{itemize}

Given the pivotal role of structural shifts in learnability, we conclude the importance of controlling structural shifts even in scenarios where GNNs are learnable in NGCL. To address this challenge, we focus on experience replay, a principal method for continual learning, which has been empirically proven to be effective in addressing catastrophic forgetting in NGCL~\citep{ahrabian2021structure, cnc_er}. We introduce a novel experience replay method designed to mitigate the impact of structural shifts for GNNs in NGCL. Our contribution in this regard is as follows:

\begin{itemize}[leftmargin=*]
% \begin{itemize}
\item  We introduce Structure-Evolution-Aware Experience Replay (SEA-ER), a novel experience replay method that targets both catastrophic forgetting and the unique challenge of structural shifts in NGCL. SEA-ER leverages two key principles - structural alignment and topological awareness of GNNs - to select experience samples and assign different replaying weights based on their structural similarity to the rest of the graphs. This approach effectively mitigates structural shifts and catastrophic forgetting for GNNs in NGCL, offering theoretical assurances on long-term performance (Proposition~\ref{prop:performance_guarantee}).
\end{itemize}

To further validate the above theoretical results and the effectiveness of our proposed experience replay method, we conduct an empirical evaluation consisting of real-world and synthetic datasets. The results align with our theoretical findings and demonstrate the effectiveness of our proposed approach. Our novel experience replay method adeptly addresses both the structural shift and catastrophic forgetting, outperforming contemporary experience replay methods in NGCL \citep{cnc_er, ahrabian2021structure, kim2022dygrain}.

\begin{figure*}[!t]
\centering
\vspace{-5mm}
\includegraphics[width=0.95\textwidth]{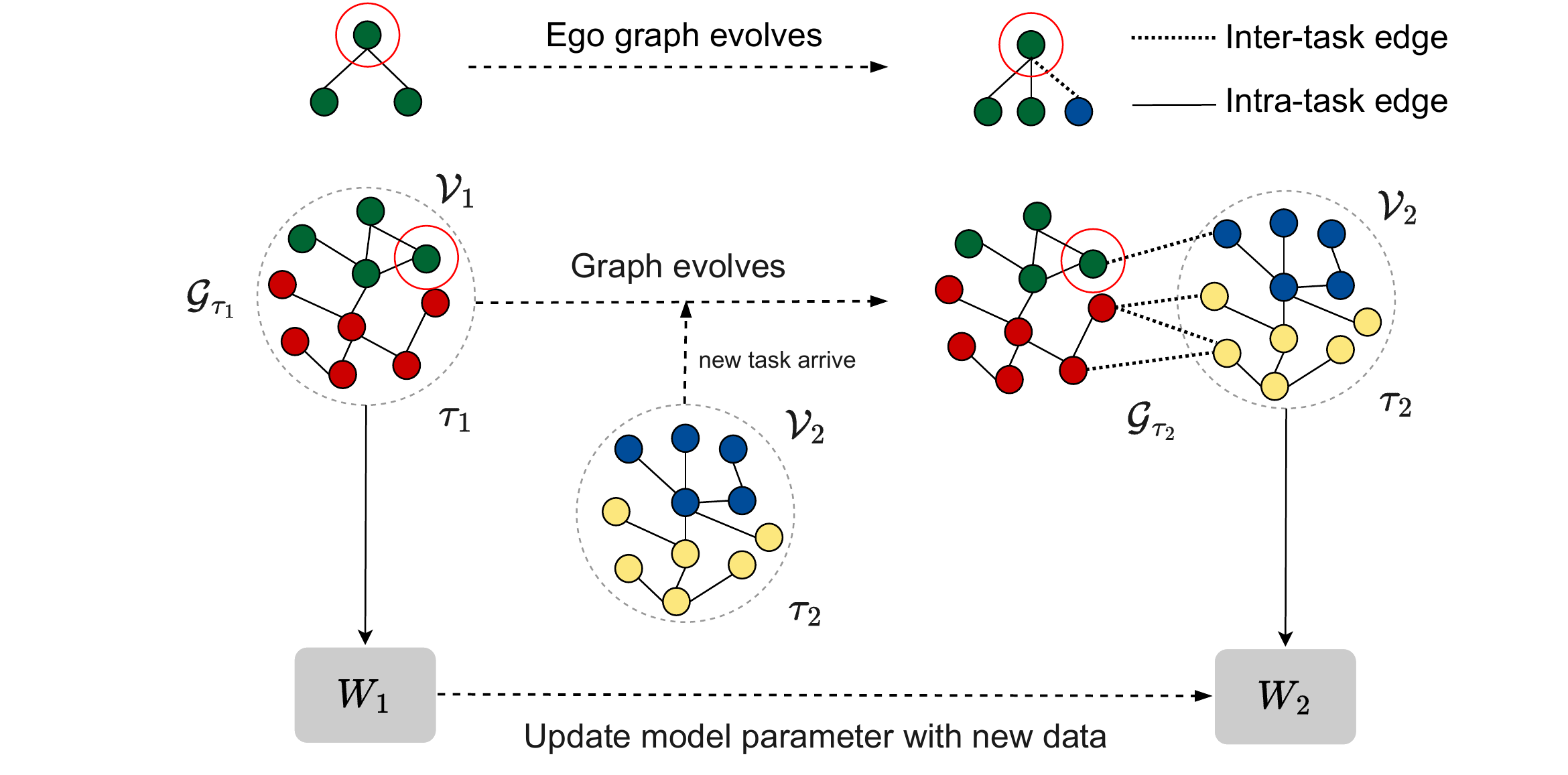}
\caption{Illustration of the progression of NGCL. Task $2$ introduces a new batch of vertices and results in an update to the parameters of the model from $W_{2}$ to $W_{1}$ using data from the new task. As new vertex batches associated with the new task $\tau_2$ are introduced, the graph structure changes, potentially altering the graph structure (inputs to GNNs) of the existing vertices, as captured by the changes in their ego graphs. This is referred to as the structural shift.}
\label{fig:NGCL}
\end{figure*}

% \begin{center}
% \emph{Can structural shift affect the learnability of GNNs in NGCL with GNNs? If so, is there a way to control or mitigate its effect?}
% \end{center}
%incremental learning
%existing approaches

\section{Related Work}\label{sec:related_work}
\subsection{Continual Learning}
Continual learning, also known as incremental or lifelong learning, has gained increasing attention in recent years and has been extensively explored on Euclidean data. We refer readers to surveys \citep{cl_survey,cl_survey2,cl_nlp} for a more comprehensive review of these works. The primary challenge of continual learning is to address the catastrophic forgetting problem, which refers to the degradation in the performance of a model on previous tasks after being trained on new tasks.

Existing approaches for addressing this problem can be broadly categorized into three types: regularization-based methods, experience-replay-based methods, and parameter-isolation-based methods. Regularization-based methods aim to maintain the model's performance on previous tasks by penalizing large changes in the model parameters \citep{jung2016less,li2017learning,kirkpatrick2017overcoming,farajtabar2020orthogonal,saha2021gradient}. Parameter-isolation-based methods prevent drastic changes to the parameters that are important for previous tasks by continually introducing new parameters for new tasks \citep{rusu2016progressive,yoon2017lifelong,yoon2019scalable,wortsman2020supermasks,wu2019large}. Experience-replay-based methods, inspired by the spaced repetition in human learning theory, can be viewed as a form of memory consolidation for neural networks \citep{gcl_er}. Experience replay methods consist of two key components: 1) experience selection and 2) replay strategy. The goal of experience selection is to select a small set of representative/strong samples from previous data and store them in an experience buffer. Then, the replay strategy is to strategically retrain the model on the data from the experience buffer to consolidate past knowledge and counteract catastrophic forgetting.

In this study, we delve into both the theoretical foundations and practical implementations of GNN in NGCL. On the theoretical front, we leverage established statistical learning frameworks previously utilized in domains such as domain adaptation~\citep{impossibility_expressive_divergence}, multi-task learning~\citep{ben2008notion}, and continual learning~\citep{benavides2022theory} at large. Specifically, we employ these frameworks to delve into the nature of structural shifts and their implications for the learnability of GNN within NGCL contexts.
Methodologically, we focus on the experience replay method for its simplicity and effectiveness \citep{lopez2017gradient,shin2017continual,aljundi2019gradient,caccia2020online,chrysakis2020online,knoblauch2020optimal,su2024pres}. Because of the non-I.I.D. nature of graph-structured data, experience replay for NGCL requires different treatment~\citep{gcl_er}. In this work, we propose a novel experience replay method tailored for counteracting the catastrophic forgetting and structural shift for GNNs in NGCL.

\subsection{Graph Continual Learning (GCL)}
Recently, there has been a surge of interest in GCL due to its practical significance in various applications \citep{wang2022lifelong,xu2020graphsail,daruna2021continual,kou2020disentangle,ahrabian2021structure,cai2022multimodal,wang2020bridging,liu2021overcoming,zhang2021hierarchical,zhou2021overcoming,carta2021catastrophic,zhang2022cglb,kim2022dygrain,tan2022graph}.
For a detailed review of GCL methodologies and their efficacy, we direct readers to recent reviews and benchmarks \citep{zhang2022cglb,yuan2023continual,febrinanto2023graph}. However, most existing NGCL studies primarily focus on scenarios where the entire graph structure is predefined or where interconnections between tasks are disregarded. The NGCL setting, where the graph evolves with the introduction of new tasks, remains relatively unexplored and warrants a deeper theoretical investigation. To our knowledge, \citep{su2023towards} is the only existing theoretical study on NGCL with an evolving graph. \cite{su2023towards} prove that the catastrophic forgetting risk (upper bound) of GNNs is determined by the structural shift of the underlying graph structure. In contrast, our work addresses the learnability of GNNs in NGCL under structural shift, which can be interpreted as a sort of ``lower bound''. We present the first impossibility result regarding the learnability of GNNs in NGCL, underscoring the critical role of structural shifts. Additionally, we introduce a novel experience replay method that effectively tackles both catastrophic forgetting and structural shifts in NGCL.

\subsection{Other Related Works}
While our primary focus is on GCL, it is worth noting there are parallel lines of inquiry within dynamic graph learning and graph unlearning that intersect with our work in terms of context and terminology. Dynamic graph learning focuses on enabling GNNs to capture the changing graph structures \citep{galke2021lifelong,wang2020streaming,han2020graph,yu2018netwalk,nguyen2018continuous,ma2020streaming,feng2020incremental,bielak2022fildne,su2024pres}. The goal of dynamic graph learning is to capture the temporal dynamics of the graph into the representation vectors while having access to all previous information. In contrast, GCL addresses the problem of catastrophic forgetting, in which the model's performance on previous tasks degrades after learning new tasks. For evaluation, a dynamic graph learning algorithm is only tested on the latest data, while GCL models are also evaluated on past data. On the other hand, the goal of graph unlearning is to "unlearn" or "forget" specific patterns, associations, or information that the GNN has acquired during training on a given dataset or set of tasks. Therefore, dynamic graph learning, graph unlearning, and GCL are independent and distinct research directions with different focuses and should be considered separately.

\section{Learnability of GNN in NGCL}\label{sec:learnability}

\subsection{Preliminary of NGCL}\label{ssec:preliminary}
We use lowercase letters to denote scalars and graph-related objectives. Furthermore, we use lower and uppercase boldface letters to denote vectors and matrices, respectively. NGCL assumes the existence of a stream of training tasks $\task_1, \task_2,..., \task_m$, characterized by observed vertex batches $\vertexSet_1,\vertexSet_2,...,\vertexSet_m$. Each vertex $v$ is associated with a node feature vector $\xb_v \in \mathcal{X}$ and a target label $y_v \in \mathcal{Y}$. The observed graph structure at training task $\task_i$ is induced by the accumulative vertices and given by $\graphStruct_{\task_i} = \graphStruct[\bigcup_{j=1}^i \vertexSet_j]$. In this setting, the graph structure is evolving as the learning progresses through different training tasks, as illustrated in Figure.~\ref{fig:NGCL}.

Node-level learning tasks with GNNs rely on information aggregation within the $k$-hop neighbourhood of a node as input. To accommodate this nature, we adopt a local view in the learning problem formulation. We denote $\neighbor{k}(v)$ as the k-hop neighborhood of vertex $v$, and the nodes in $\neighbor{k}(v)$ form an ego-graph $\ego{v}$, which consists of a (local) node feature matrix $\Xb_v = \{x_u |u \in \neighbor{k}(v)\}$ and a (local) adjacency matrix $\Ab_v = \{a_{uw}|u,w \in \neighbor{k}(v)\}$. We denote $\mathbb{G}$ as the space of possible ego-graph and $\Gb_{v} = (\Ab_v,\Xb_v)$ as the ego-graph for the target vertex $v$.  Let $\ego{\vertexSet} = \{ \ego{v} |v \in \vertexSet \}$ denote the set of ego graphs associated with vertex set $\vertexSet$. From the perspective of GNNs, the ego-graph $\ego{v}$ is the Markov blanket containing all necessary information for the prediction problem for the root vertex $v$. Therefore, we can see the prediction problem associated with data $\{(\ego{v},y_v)\}_{v \in \vertexSet_i}$ from training session $\task_i$ as drawn from a joint distribution $\prob(\mathbf{y_{v}},\ego{v}|\vertexSet_i)$. The dependency of the vertex is captured and manifested through the underlying graph structure in the above formulation.

Let $\hypothesis$ denote the hypothesis space and $F \in \hypothesis$ be a classifier with $\hat{y}_v = F (\ego{v})$ and $\loss(.,.) \mapsto \R$ be a given loss function. We use $R_{\jointDistCondVertSet{v}{\vertexSet}}^{\loss}(F)$ to denote the generalization performance of the classifier $f$ with respect to $\loss$ and $\jointDistCondVertSet{v}{\vertexSet}$, and it is defined as follows:
 \begin{equation}\label{eq:gen_risk}
     R_{\jointDistCondVertSet{v}{\vertexSet}}^{\loss}(F) = \expect_{\jointDistCondVertSet{v}{\vertexSet}}[\loss(F(\ego{v}),y_v)].
 \end{equation}

{\bf Catastrophic Forgetting.}
With the formulation above, the catastrophic forgetting ($\cataForget$) of a classifier $F$ after being trained on $\task_m$ can be characterized by the retention of performance on previous vertices/tasks, given by: 
\begin{equation}\label{eq:cf}
\begin{split}
      \cataForget(F) & := R_{\jointDistCondVertSet{v}{\vertexSet_1,...,\vertexSet_{m-1}}}^{\loss}(F),
\end{split}
\end{equation}
which translates to the retention of performance of the classifier $F$ from $\task_m$ on the previous tasks $\task_1,...,\task_{m-1}$.

\subsection{Structural Shift and Learnability}
This paper delves into the relationship between structural shifts and the learnability of GNNs in NGCL. We begin with defining the expressiveness of the hypothesis space and characterizing the structural shift.

\begin{definition}[Expressiveness of hypothesis space]\label{def:hypothesis_expressive}
For a set of distributions $\{\prob_1,\prob_2,...,\prob_k\}$ and a hypothesis space $\hypothesis$, we define the joint prediction error of the hypothesis space on the given distributions as, $$\lambda_{\hypothesis}(\{\prob_1,\prob_2,...,\prob_k\}) = \inf_{F \in \hypothesis}\left[ \sum_{\prob \in \{\prob_1,\prob_2,...,\prob_k\}}R_{\prob}^{\mathcal{L}}(F) \right].$$
\end{definition}

The joint prediction error characterizes the capability (expressiveness) of the hypothesis space to simultaneously fit a given set of distributions. Clearly, the expressiveness of the hypothesis space is a trivial necessary condition for the learnability of a given setting. If the expressiveness of the hypothesis space is insufficient, even the best classifier within the hypothesis space would yield poor performance. Throughout the paper, we focus our analysis on structural shifts, assuming that the training samples for each task and the expressive power of the hypothesis space are sufficient. This means that when all the tasks are presented together, there exists a learning algorithm that can produce a classifier within the hypothesis space that incurs minimal error. Further discussion on expressiveness and sample complexity is provided in Appendix~\ref{appendix:connection_proof}.

Next, we formalize the notion of structural shift among different vertex batches (tasks).

 \begin{definition}
Let $\prob_1$ and $\prob_2$ be two distributions over some domain $\mathcal{X}$, $\hypothesis\Delta \hypothesis$-distance between $\prob_1$ and $\prob_2$ is defined as, $$d_{\hypothesis\Delta \hypothesis}(\prob_1,\prob_2) = 2 \sup_{A \in \hypothesis\Delta \hypothesis}\|\prob_1(A) - \prob_2(A)\|,$$ where $\hypothesis\Delta \hypothesis = \{ \{ \xb \in \mathcal{X}:F(\xb) \neq F'(\xb) \}: F, F' \in \hypothesis\}$.
 \end{definition}
 
$\hypothesis\Delta \hypothesis$-distance is commonly used to capture the relatedness (distance) of distributions with respect to the hypothesis space $\hypothesis$~\citep{da_survey}. Here, we use it to characterize the effect of the emerging task on the distribution of the previous vertex tasks. If the structural shift among different tasks is large, then the appearance of the new vertex batch would shift the distribution of the previous vertex batch to a larger extent. Consequently, the $\hypothesis\Delta \hypothesis$-distance between the updated distribution and the previous distribution would be large

Without loss of generality, we may gauge our learnability analysis toward the case of two training sessions, referred to as the NGCL-2 problem. If the NGCL-2 problem is not learnable, then trivially, the general NGCL-k is not learnable. Furthermore, the corresponding insight can also be easily extended to multiple training tasks by recursively applying the analysis and treating consecutive tasks as a joint task. The NGCL-2 problem is characterized by three distributions: $\prob(y, \ego{v}|\vertexSet_1, \graphStruct_{\task_1})$, $\prob(y, \ego{v}|\vertexSet_1, \graphStruct_{\task_2})$, and $\prob(y, \ego{v}|\vertexSet_2, \graphStruct_{\task_2})$, which are the distributions of $\vertexSet_1$ in graphs $\graphStruct_{\task_1}$ and $\graphStruct_{\task_2}$, and the distribution of $\vertexSet_2$ in graph $\graphStruct_{\task_2}$. Then, the following theorem captures the learnability of GNN in NGCL-2.

\begin{theorem}[necessity of controllable structural shift, informal]\label{thm:divergence_necessity}
Let $\hypothesis$ be a hypothesis space over $\mathbb{G} \times \mathcal{Y}$ captured by GNNs. If there is no control on
$$d_{\hypothesis\Delta \hypothesis}\big (\prob(y, \ego{v}|\vertexSet_1, \graphStruct_{\task_1}),\prob(y, \ego{v}|\vertexSet_2, \graphStruct_{\task_2})\big ) \quad \mathrm{and} \quad d_{\hypothesis\Delta \hypothesis}\big (\prob(y, \ego{v}|\vertexSet_1, \graphStruct_{\task_1}),\prob(y, \ego{v}|{\color{blue}\vertexSet_1}, \graphStruct_{\task_2})\big )$$

then, for every $c > 0$, there exists an instance of NGCL-2 problem satisfying the following:
\begin{enumerate}
    \item $\lambda_{\hypothesis} (\{\prob(y, \ego{v}|\vertexSet_1, \graphStruct_{\task_1}), \prob(y, \ego{v}|\vertexSet_1, \graphStruct_{\task_2})\}) \leq c,$
    \item for any given continual learning algorithms, there is at least $1/2$ probability that it would produce a classifier $F$ from $\hypothesis$ with  $\cataForget(F)$ at least $1/2$.
\end{enumerate}
\end{theorem}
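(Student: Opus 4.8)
The plan is to prove this impossibility by an adversarial two-instance (symmetrization) argument in the spirit of the domain-adaptation lower bounds of \citep{impossibility_expressive_divergence, ben2008notion}, adapted to the ego-graph setting. Write $P_1^{\mathrm{old}} := \prob(y,\ego{v}|\vertexSet_1,\graphStruct_{\task_1})$, $P_1^{\mathrm{new}} := \prob(y,\ego{v}|\vertexSet_1,\graphStruct_{\task_2})$, and $P_2 := \prob(y,\ego{v}|\vertexSet_2,\graphStruct_{\task_2})$; note that $P_1^{\mathrm{old}}$ and $P_2$ are the only distributions the learner ever sees labeled data from, whereas $\cataForget$ is evaluated against $P_1^{\mathrm{new}}$ (the old vertices read out through their post-shift ego-graphs). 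Fix an arbitrary $c>0$. I would construct a pair of NGCL-2 instances, $A$ and $B$, that are statistically identical on the training stream but carry complementary ground-truth labels on the shifted region, and let a hidden fair coin decide which of $A,B$ is realized. Because the coin is not identifiable from the training data, any (possibly randomized) learning algorithm $\learningAlgorithm$ produces a classifier $F$ with the same conditional law under $A$ and under $B$, and this is exactly what forces the $1/2$ failure probability.

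The first step is to produce the witness pair. Since $d_{\hypothesis\Delta \hypothesis}(P_1^{\mathrm{old}},P_1^{\mathrm{new}})$ is uncontrolled, the definition of the $\hypothesis\Delta\hypothesis$-distance yields $F_A,F_B\in\hypothesis$ whose disagreement region $\{\ego{v}:F_A(\ego{v})\neq F_B(\ego{v})\}$ carries essentially all of the $P_1^{\mathrm{new}}$ mass but none of the $P_1^{\mathrm{old}}$ mass; equivalently, $F_A$ and $F_B$ agree and are correct on the pre-shift ego-graphs of $\vertexSet_1$ yet disagree on their post-shift ego-graphs. I then define instance $A$ to use $F_A$ as the ground-truth labeling of the shifted region and instance $B$ to use $F_B$, while on $P_1^{\mathrm{old}}$ both instances use the common correct labels and on $P_2$ both use one fixed labeling. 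The role of the second uncontrolled quantity, $d_{\hypothesis\Delta \hypothesis}(P_1^{\mathrm{old}},P_2)$, is to guarantee enough room in ego-graph space to place the support of $P_2$ separately from both $P_1^{\mathrm{old}}$ and the disagreement region, so that the task-$2$ labels are identical in $A$ and $B$ and consistent with both $F_A$ and $F_B$; this makes the entire training stream indistinguishable across the two instances and prevents task $2$ from pinning down the labeling of the shifted old vertices.

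With the construction in place, both conclusions follow quickly. For (1), $F_A$ witnesses $\lambda_{\hypothesis}(\{P_1^{\mathrm{old}},P_1^{\mathrm{new}}\})\le c$ in instance $A$ (it is exactly correct on $P_1^{\mathrm{new}}$ by definition of the labels and can be arranged to incur at most $c$, indeed $0$, on $P_1^{\mathrm{old}}$), and symmetrically $F_B$ in instance $B$. For (2), fix any $F$ returned by $\learningAlgorithm$: under $0$--$1$ loss and complementary labels on the common support of $P_1^{\mathrm{new}}$, every prediction $F(\ego{v})$ matches exactly one of the two assignments, so $R^{\loss}_{P_1^{\mathrm{new}},A}(F)+R^{\loss}_{P_1^{\mathrm{new}},B}(F)=1$ and hence $\max\{\,\cataForget_A(F),\cataForget_B(F)\,\}\ge 1/2$. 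Averaging over the fair coin (and over the internal randomness of $\learningAlgorithm$, whose output law is coin-independent by indistinguishability) gives probability at least $1/2$ that the realized instance has $\cataForget(F)\ge 1/2$.

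The main obstacle I expect is not the symmetrization bookkeeping but the graph-specific realizability: unlike in ordinary domain adaptation, $P_1^{\mathrm{new}}$ cannot be chosen freely, since it must arise from $P_1^{\mathrm{old}}$ by an \emph{actual} structural shift, i.e.\ by augmenting the ego-graphs of $\vertexSet_1$ with the edges and vertices contributed by $\vertexSet_2$ in $\graphStruct_{\task_2}$, and the witnesses $F_A,F_B$ must be genuine GNNs. I would discharge this with an explicit gadget: choose the $\vertexSet_1$ ego-graphs so that their GNN readout is well-separated and correctly classified before the shift, and let the shift attach $\vertexSet_2$-neighbours whose aggregated message drives the post-shift embedding across a decision boundary. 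Because the theorem grants no control on the magnitude of the shift, the attached structure can be amplified until the aggregated representation is pushed arbitrarily far, which simultaneously makes $d_{\hypothesis\Delta \hypothesis}(P_1^{\mathrm{old}},P_1^{\mathrm{new}})$ as large as needed and lets two GNN weight settings that agree before the shift realize $F_A\neq F_B$ after it. Verifying that this gadget is a legitimate evolving graph and that the two distances are simultaneously uncontrollable is the technical crux; everything else is the standard indistinguishability-plus-complementary-labels computation.
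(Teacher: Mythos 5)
Your proposal is correct in its core logic, but it takes a genuinely different route from the paper. The paper never constructs adversarial instances: it defines a relaxed problem (NGCL-2-weak, where the learner may access the task-1 and task-2 labeled data in arbitrary order and is evaluated only on the post-shift distribution $\prob(y,\ego{v}|\vertexSet_1,\graphStruct_{\task_2})$), proves a two-way reduction identifying NGCL-2-weak with domain adaptation by mapping the mixture of $\prob(y,\ego{v}|\vertexSet_1,\graphStruct_{\task_1})$ and $\prob(y,\ego{v}|\vertexSet_2,\graphStruct_{\task_2})$ to the source and $\prob(y,\ego{v}|\vertexSet_1,\graphStruct_{\task_2})$ to the target, and then invokes the DA impossibility theorem of \citep{impossibility_expressive_divergence} as a black box. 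What you do instead is unpack the proof of that black box --- the two-instance symmetrization with complementary labels on the $\hypothesis\Delta\hypothesis$ disagreement region, indistinguishable training streams, the pointwise identity $R_A+R_B=1$ under $0$--$1$ loss, and averaging over the hidden coin --- and run it directly on the ego-graph distributions. Your route is self-contained and has the virtue of confronting the realizability question (whether the adversarial target distribution can actually arise from a structural shift on a single evolving graph, with witnesses realizable by GNN weight settings), which the paper's abstract reduction leaves implicit; that gadget construction is the one step you have sketched but not discharged, and it is genuinely the hardest part of doing it your way. The paper's route buys modularity --- the same reduction immediately yields the expressiveness and sample-complexity corollaries in its appendix --- at the cost of treating the distributions purely abstractly. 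One further caution shared by both arguments: the symmetrization only shows that for each algorithm at least one of the two instances fails with probability $1/2$, so the instance (specifically the labeling) is chosen after the learner, exactly as in the quantifier order of the cited DA theorem; your write-up should state the quantifiers in that order rather than claiming a single instance defeating all algorithms.
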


The formal version and proof of Theorem~\ref{thm:divergence_necessity} can be found in Appendix~\ref{appendix:connection_proof}. Theorem~\ref{thm:divergence_necessity} indicates that even if the hypothesis space is expressive enough (i.e., each individual task and their combination are learnable by the GNNs), the NGCL problem might still not be learnable without control on the structural shift among different vertex batches. We make the following remarks.
\begin{remark}\label{rem:model_selection}
    Theorem~\ref{thm:divergence_necessity} suggests that when considering continual learning for a volatile network (a large structural shift), a GNN-like model is not a suitable candidate, as its learnability is heavily damaged by the structural shift. Different treatment is required. This renders an interesting future direction but beyond the scope of this paper. 
\end{remark}

\begin{remark}\label{rem:structural_shift}
    The structural shift 
    % characterized by $d_{\hypothesis\Delta \hypothesis}\big (\prob(y, \ego{v}|\vertexSet_1, \graphStruct_{\task_1}),\prob(y, \ego{v}|\vertexSet_2, \graphStruct_{\task_2})\big )$ and $d_{\hypothesis\Delta \hypothesis}\big (\prob(y, \ego{v}|\vertexSet_1, \graphStruct_{\task_1}),\prob(y, \ego{v}|\vertexSet_1, \graphStruct_{\task_2})\big )$
    can harm the learnability of GNNs in NGCL. Recognizing this pivotal role of structural shift, we conclude that it is important to control structural shifts even in scenarios where GNNs are learnable in NGCL. This motivates our design of the experience replay method described in the next section and can be extended to modify other continual learning methods for GNNs in NGCL.
\end{remark}

\section{Structure-Evolution-Aware Experience Replay (SEA-ER)}\label{sec:cl_framework}

In the preceding sections, we explored the learnability of GNN within the NGCL context, highlighting the significance of structural shifts in this learnability. Shifting focus from theoretical aspects to practical implementations in this section, we examine the application of the experience replay method in NGCL. We introduce SEA-ER, which integrates the topology awareness of GNNs with structural alignment to mitigate catastrophic forgetting and address structural shifts in the NGCL problem. Experience replay involves two key components: 1) experience buffer selection, which entails choosing representative data samples from the past, and 2) replay strategy, determining how the experience buffer is utilized in training with new tasks. The pseudo-code for our methods can be found in Appendix~\ref{appendix:framework}.

\subsection{Topology-Aware Experience Buffer Selection}
We denote the experience buffer for task $\task_i$ as $\experienceBuffer_i = \{P_1, P_2, ..., P_{i-1}\}$, where $P_j$ represents samples selected from task $\task_j$. GNNs have demonstrated superior generalization performance on vertices with closely related structural characteristics~\citep{subgroup}. Accordingly, we propose an experience buffer selection strategy that chooses samples with the closest structural similarity to the remaining vertices. This strategy is formulated as the optimization problem:
\begin{equation}\label{eq:experience_buff}
\begin{split}
       &\argmin_{P_j \subset \mathcal{V}_j} \max_{u \in \mathcal{V}_j \backslash P_j} \min_{v \in P_j} d_{\mathrm{spd}}(u,v), \\
        & s.t. \quad |P_j|  = b.
\end{split}
\end{equation}
Here, $d_{\mathrm{spd}}(.)$ represents the shortest path distance in the graph, and $b$ is the number of samples (budget) to select from each task. The optimization above returns a set $P_j$  with close structural similarity to the rest of the vertices within the same task. For our experiments, we adopt the classic farthest-first traversal algorithm to obtain a solution.

\subsection{Replay Strategy with Structural Alignment}
The standard replay strategy incorporates the experience buffer into the training set of the newest task, treating the samples in the buffer as normal training data. However, structural shifts induced by evolving graph structures from $\graphStruct_{\task_{i-1}}$ to $\graphStruct_{\task_{i}}$ can alter the distribution of vertices from the previous task, potentially leading to suboptimal learning. To mitigate the impact of structural shift, we propose a replay strategy with structural alignment. This strategy assigns higher weights to replay samples that are more similar to the distribution before the structural shift, ensuring that the model prioritizes relevant information. The learning objective with structural alignment is given by:
\begin{equation}\label{eq:re_learn}
    \loss = \frac{1}{|\vertexSet^{\mathrm{train}}_i|} \sum_{v \in \vertexSet^{\mathrm{train}}_i} \loss(v) + \sum_{P_j \in \experienceBuffer_i}\frac{1}{|P_j|} \sum_{v \in P_j}{\color{blue} \beta_v} \loss(v),
\end{equation}
where $\beta_v$ (highlighted in blue) is the main difference from the standard replay strategy and represents the weight for vertex $v$ in the experience buffer. To determine $\beta$ (collection of $\beta_v$), we use kernel mean matching (KMM)~\cite{gretton2009covariate} with respect to the embeddings of vertices in the experience buffer. This involves solving the following convex quadratic problem:
\begin{equation}\label{eq:kmm_weight}
\begin{split}
    & \min_{\beta} \Big\| \sum_{P_j \in \experienceBuffer_i}\sum_{v \in P_j} \beta_v \phi(h_v) -\phi(h_v')\Big\|^2, \quad \\
    & s.t. \quad B_l \leq \beta_v < B_u,
\end{split}
\end{equation}
\noindent where $h_v$ is the node embedding of $v$ (output of the GNN) with respect to $\graphStruct_{\task_i}$ and $h_v'$ is the node embedding of $v$ with respect to $\graphStruct_{\task_{i-1}}$. Eq.~\ref{eq:kmm_weight} is equivalent to match the mean elements in some kernel space $\mathcal{K}(.,.)$ and $\phi(.)$ is the feature map to the reproducing kernel Hilbert space induced by the kernel $\mathcal{K}(.,.)$. In our experiment, we use a mixture of Gaussian kernel  $\mathcal{K}(x,y) = \sum_{\alpha_i} e^{-\alpha_i ||x-y||_2}$, $\alpha_i = 1, 0.1, 0.01$. The lower bound $B_l$ and upper bound $B_u$ constraints are to ensure reasonable weight for most of the instances.

\subsection{Theoretical Analysis of SEA-ER}\label{sec:theory}
In this section, we provide a theoretical foundation for our proposed method by introducing the concept of distortion rate, which helps characterize the topological awareness of GNNs.

\begin{definition}[distortion rate]\label{def:distortion}
    Given two metric spaces $(\mathcal{Q},d)$ and $(\mathcal{Q'},d')$ and  
 	a mapping $\gamma: \mathcal{Q} \mapsto \mathcal{Q'}$, $\gamma$ is said to have a distortion $\alpha \geq 1$, if there exists a constant $r > 0$ such that $\forall u,v \in \mathcal{E}$, $$ r d(u,v) \leq d'(\gamma(u),\gamma(v))\leq \alpha r d(u,v).$$
\end{definition}

 The distortion rate quantifies the ratio of distances in the embedding space to those in the original graph space. A low distortion rate, close to 1, signifies that the embedding effectively preserves the structural information from the original graph. This measure allows us to assess the topological awareness of GNNs and forms the basis for our theoretical analysis of the experience selection strategy in relation to the underlying graph topology.

\begin{proposition}\label{prop:performance_guarantee}
Given a collection task $\task_1,...,\task_m$ and their associated vertex set $\vertexSet_1,...,\vertexSet_m$. Let $\experienceBuffer_m = \{P_1,...,P_{m-1}\}$ be the experience buffer where each $P_i \in \experienceBuffer_m$ has the same cardinality and cover all classes in $\task_i$. 
Let $F$ be the learned GNN model with distortion rate $\alpha$. Assuming the loss function $\loss$ is bounded, continuous and smooth, we have that,
    $$ R_{\jointDistCondVertSet{v}{\vertexSet_i \backslash P_i}}^{\loss}(F) \leq R_{\jointDistCondVertSet{v}{ P_i}}^{\loss}(F) + \mathcal{O} (\alpha  D_s(\vertexSet_i \backslash P_i,P_i)),$$
where 
$$
  D_s(\vertexSet_i \backslash P_i,P_i) = \max_{u \in \vertexSet_i \backslash P_i}\min_{v \in P_i} d_{\mathrm{spd}}(v, u).
$$
\end{proposition}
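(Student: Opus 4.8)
The plan is to reduce the inequality to a pointwise comparison between each target vertex in $\vertexSet_i \backslash P_i$ and its nearest buffer vertex in $P_i$, and then to chain together three controls: the covering radius $D_s$ bounds distances in the graph, the distortion rate $\alpha$ transports that bound into the embedding space, and the smoothness of $\loss$ transports it further into loss values. Note that $D_s(\vertexSet_i \backslash P_i, P_i)$ is precisely the covering radius targeted by the selection objective in Eq.~\ref{eq:experience_buff}, so the argument ties the guarantee directly to the selection rule.

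First I would expand both risks using the definition in Eq.~\ref{eq:gen_risk}, writing each $R^{\loss}$ as an average of $\loss(F(\ego{v}), y_v)$ over its vertex population; write $\mu$ and $\nu$ for the induced distributions over $\vertexSet_i \backslash P_i$ and $P_i$. For every $u \in \vertexSet_i \backslash P_i$ let $\pi(u) \in P_i$ be a minimiser of $d_{\mathrm{spd}}(u, \cdot)$ over $P_i$; by the definition of $D_s$ we have $d_{\mathrm{spd}}(u, \pi(u)) \leq D_s(\vertexSet_i \backslash P_i, P_i)$ for all such $u$.

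Next I would invoke the distortion-rate bound (Definition~\ref{def:distortion}) with $\gamma$ the GNN embedding map, $(\mathcal{Q}, d)$ the vertex set under $d_{\mathrm{spd}}$, and $(\mathcal{Q}', d')$ the embedding space; only the upper side $d'(h_u, h_{\pi(u)}) \leq \alpha r\, d_{\mathrm{spd}}(u, \pi(u)) \leq \alpha r D_s$ is needed, where $r$ is an absolute constant absorbed into $\mathcal{O}(\cdot)$. Because $\loss$ is bounded, continuous, and smooth and $F$ reads out its prediction from the embedding, the value $\loss(F(\ego{\cdot}), y)$ is a Lipschitz function of the embedding with some finite constant $C$ on the relevant compact range, so $|\loss(F(\ego{u}), y) - \loss(F(\ego{\pi(u)}), y)| \leq C\, d'(h_u, h_{\pi(u)}) = \mathcal{O}(\alpha D_s)$ whenever $y_u = y_{\pi(u)}$. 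Summing $\loss(F(\ego{u}), y_u) \leq \loss(F(\ego{\pi(u)}), y_{\pi(u)}) + \mathcal{O}(\alpha D_s)$ over $u$ then bounds $R_{\jointDistCondVertSet{v}{\vertexSet_i \backslash P_i}}^{\loss}(F)$ by a reweighted loss over $P_i$ plus $\mathcal{O}(\alpha D_s)$.

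The delicate step, and the one I expect to be the main obstacle, is converting this reweighted sum into the uniform risk $R_{\jointDistCondVertSet{v}{P_i}}^{\loss}(F)$ that appears in the statement: the map $\pi$ weights each buffer vertex by the size of its nearest-neighbour cell rather than uniformly, so the pushforward of $\mu$ under $\pi$ differs from $\nu$ (the nearest-neighbour coupling controls the transport cost from $\mu$ to its pushforward, not $W_1(\mu, \nu)$). This is exactly where the two hypotheses enter: requiring each $P_i$ to cover all classes lets $\pi$ be chosen label-preserving, so the Lipschitz step needs $\loss$ only in its first argument, and requiring equal cardinalities together with the boundedness of $\loss$ lets me absorb the residual imbalance between the reweighted and uniform averages into the $\mathcal{O}(\cdot)$ term. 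The remaining choices — identifying the correct metric spaces and Lipschitz constant for the distortion argument — are routine once this comparison is handled.
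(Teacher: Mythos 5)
Your proposal follows essentially the same route as the paper's proof: map each vertex of $\vertexSet_i\backslash P_i$ to its nearest buffer vertex, bound that graph distance by the covering radius $D_s$, push it through the distortion bound into embedding space, use smoothness of the loss and readout to turn it into a loss gap, and then aggregate. The only substantive divergence is in the label term and the final averaging --- the paper bounds $\|y_v - y_{\sigma(v)}\|$ by implicitly assuming labels are Lipschitz in the embeddings and then silently identifies the resulting Voronoi-weighted average over $P_i$ with the uniform risk $R_{\jointDistCondVertSet{v}{P_i}}^{\loss}(F)$, whereas you make the matching label-preserving (at the cost of the distance bound then being the class-restricted rather than the stated covering radius) and explicitly flag the reweighting discrepancy, a step the paper's own proof does not address.
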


The proof for Propostion~\ref{prop:performance_guarantee} can be found in Appendix~\ref{appendix:structural_relation}. This proposition demonstrates that catastrophic forgetting in the model, as measured by its ability to retain performance on past data, is bounded by the training loss on the experience buffer and is further influenced by a combination of factors involving the distortion rate of the model and the structural distance within the experience buffer. This underscores the critical role of graph structure when selecting a suitable experience buffer, as the structural distance substantially impacts the performance bound. Consequently, Proposition~\ref{prop:performance_guarantee} validates the optimization objective, Eq.~\ref{eq:experience_buff}, for selecting an experience buffer and provides a performance guarantee for our proposed experience selection strategy.

\begin{figure}[t!]
\subfigure[Evolution of Task 1 Performance]{
\includegraphics[width=.35\textwidth]{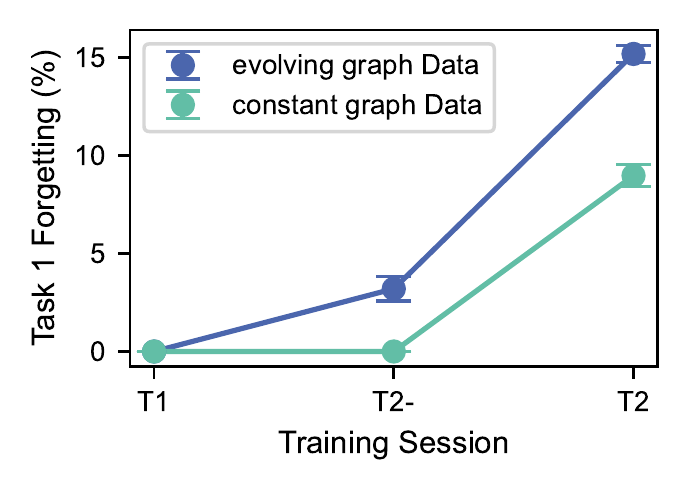}
\label{fig:structural_dependency}
}
\subfigure[Arxiv, Constant]{
\includegraphics[width=.285\textwidth]{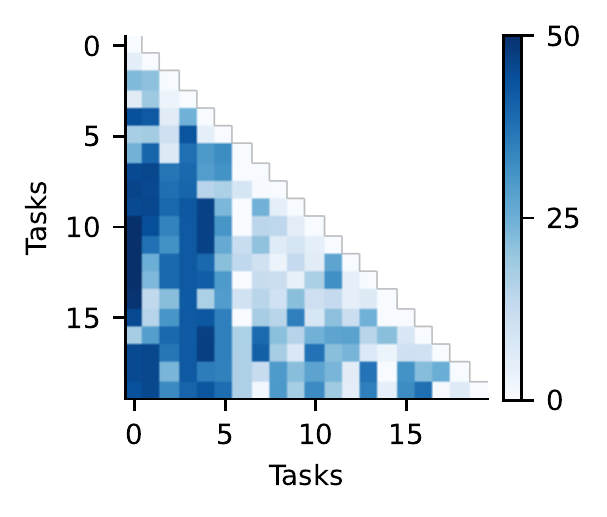}
\label{fig:forget_dynamic1}
}\hspace{-4mm}
\subfigure[Arxiv, Evolving]{
\includegraphics[width=.285\textwidth]{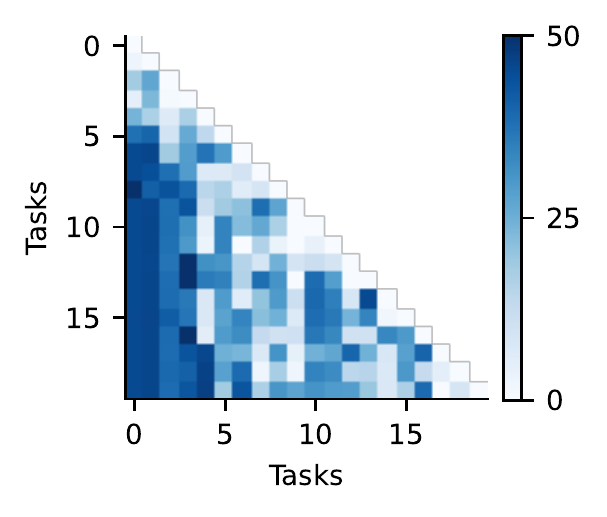}
\label{fig:forget_dynamic2}
}
\caption{Forgetting Dynamics of Bare Model on Arxiv under Settings with Constant and Evolving Graphs. (a) captures the change of catastrophic forgetting of task 1 when transitioning into task 2 under the settings with constant and evolving graphs. (b) and (c) are the complete catastrophic forgetting matrix ($x,y$-axis are the $i,j$ in $r_{i,j}-r_{i,i}$ correspondingly) of Bare model in the inductive and transductive settings. The colour density indicates the amount of forgetting(denser color means larger forgetting).}
\label{fig:diff}
\end{figure}

\begin{table*}[t!]
  \centering
  \caption{Performance comparison of SEA-ER to existing experience replay NGCL frameworks ($\uparrow$ higher means better). Results are averaged among five trials. {\bf Bold letter} indicates that the entry admits the best performance for that given dataset. $5\%$ of the training set size is used for the size of the experience replay buffer for all the experience replay methods. 
  }  
  \resizebox{\textwidth}{!}{
    \begin{tabular}{|c|cc|cc|cc|}
    \hline
        Dataset  &        \multicolumn{2}{c}{CoraFull}         & \multicolumn{2}{c}{OGB-Arxiv} & \multicolumn{2}{c}{Reddit} \\
    \hline
        Performance Metric& FAP (\%) $\uparrow$ & FAF (\%) $\uparrow$ & FAP (\%) $\uparrow$ & FAF (\%) $\uparrow$ & FAP(\%) $\uparrow$ & FAF (\%) $\uparrow$\\
    \hline
    Bare model & 61.41 $\pm$ 1.2 & -30.36 $\pm$ 1.3 & 59.24 $\pm$ 3.6 & -33.72  $\pm$ 3.3 & 68.60 $\pm$ 1.8 & -23.87 $\pm$ 1.7 \\
    Joint Training & 92.34 $\pm$ 0.6 & N.A.  & 94.36 $\pm$ 0.4 & N.A.   & 95.27 $\pm$ 1.2 & N.A. \\
    \hline
        ER:IID-Random & 77.24 $\pm$ 5.8 & -16.73 $\pm$ 5.6  & 79.72 $\pm$ 4.6 & -14.89 $\pm$ 4.5  & 80.08 $\pm$ 3.3 & -10.89 $\pm$ 2.5 \\
        ER-deg & 86.24 $\pm$ 0.7 & -8.73 $\pm$ 0.7 & 85.28 $\pm$ 0.8 & -10.89 $\pm$ 1.9  & 85.08 $\pm$ 0.9 & -8.89 $\pm$ 0.9\\
        ER-infl & 87.99 $\pm$ 0.6 & -8.31 $\pm$ 0.5 & 86.42 $\pm$ 1.2 & -7.65  $\pm$ 1.3 & 86.98 $\pm$ 1.4 & -6.82 $\pm$ 1.6\\
        ER-rep  & 88.95 $\pm$ 0.9 & -7.55 $\pm$ 0.8 & 88.12 $\pm$ 0.9 & -8.23  $\pm$ 0.9 & 86.02 $\pm$ 1.2 & -7.21 $\pm$ 1.3\\
        SEA-ER w.o. sa (ours) & 89.13 $\pm$ 0.8 & -7.21 $\pm$0.9 & 89.19 $\pm$ 0.7  & -7.13 $\pm$ 0.5 & 88.48 $\pm$ 1.7 & -6.10 $\pm$ 1.6\\
    \hline
          ER:IID-Random w. sa & 80.24 $\pm$ 6.9 & -14.73 $\pm$ 4.8  & 81.72 $\pm$ 5.2 & -13.23 $\pm$ 5.2  & 83.43 $\pm$ 2.8 & -10.32 $\pm$ 3.8 \\
        ER-deg w. sa & 87.32 $\pm$ 0.5 & -7.89 $\pm$ 0.5 & 86.88 $\pm$ 0.6 & -9.32 $\pm$ 2.0  & 87.11 $\pm$ 1.0 & -6.89 $\pm$ 1.0\\
        ER-infl w. sa & 88.88 $\pm$ 0.5 & -8.01 $\pm$ 0.5 & 89.42 $\pm$ 1.3 & -5.65  $\pm$ 1.7 & 90.64 $\pm$ 1.6 & -3.63 $\pm$ 1.3\\
        ER-rep w. sa  & 89.06 $\pm$ 0.7 & -7.00 $\pm$ 0.9 & 88.92 $\pm$ 0.8 & -8.00  $\pm$ 1.1 & 88.99 $\pm$ 1.4 & -4.21 $\pm$ 1.1\\
        SEA-ER (ours) &  {\bf 91.67} $\pm$ 1.3 & {\bf -4.01} $\pm$ 1.4& {\bf 92.88} $\pm$ 1.1 & {\bf -3.08} $\pm$ 0.7   & {\bf 92.89} $\pm$ 1.5 &  {\bf -2.72} $\pm$  1.4\\
    \hline
    \end{tabular}%
   }
  \label{tab:replay}%
\end{table*}%

\begin{table}[t!]
  \centering
\resizebox{0.6\textwidth}{!}{
    \begin{tabular}{c|ccc}
    \hline
        Datasets  &        OGB-Arxiv  & Reddit & CoraFull          \\   
    \hline
           \# vertices & 169,343 & 227,853 & 19,793\\
           \# edges   & 1,166,243 & 114,615,892 & 130,622 \\
           \# class & 40 & 40 & 70\\
    \hline
         \# tasks & 20 & 20 & 35\\
         \# vertices / \# task & 8,467 & 11,393 & 660\\
         \# edges / \# task & 58,312 & 5,730,794 & 4,354\\
   \hline
    \end{tabular}
    }
    \captionof{table}{Continual learning settings for each dataset.}
  \label{tab:data_description}
\end{table}

% \begin{wraptable}{r}{0.6\linewidth}
% \resizebox{0.6\textwidth}{!}{
%     \begin{tabular}{c|ccc}
%     \hline
%         Datasets  &        OGB-Arxiv  & Reddit & CoraFull          \\   
%     \hline
%            \# vertices & 169,343 & 227,853 & 19,793\\
%            \# edges   & 1,166,243 & 114,615,892 & 130,622 \\
%            \# class & 40 & 40 & 70\\
%     \hline
%          \# tasks & 20 & 20 & 35\\
%          \# vertices / \# task & 8,467 & 11,393 & 660\\
%          \# edges / \# task & 58,312 & 5,730,794 & 4,354\\
%    \hline
%     \end{tabular}
%     }
%     \captionof{table}{Continual learning settings for each dataset.}
%   \label{tab:data_description}
% \end{wraptable}

\section{Experiments}\label{sec:evaluation}
In this section, we present an empirical validation of our theoretical results and evaluation of our proposed experience replay method for NGCL. In this empirical study, we aim to answer the following questions: (1) Does structural shift indeed have an impact on the learnability and catastrophic forgetting of GNNs in NGCL? (2) Does the distortion rate of GNNs, as discussed in Proposition~\ref{prop:performance_guarantee}, hold true in practice? (3) How does our proposed experience replay method compare to existing experience replay methods for GNNs~\citep{cnc_er,ahrabian2021structure,kim2022dygrain} in NGCL? Due to space limitations, we only summarize the key details of the experiments in the main paper and defer a more comprehensive description of the datasets, experiment set-up and additional results in Appendix~\ref{appendix:exp}.

\paragraph{  Datasets and General Set-up.} In general, we follow closely the experiment settings in ~\citep{su2023towards,zhang2022cglb}. We conduct experiments on both real-world and synthetic datasets. Real-world datasets include OGB-Arxiv~\citep{ogb}, Reddit~\citep{reddit}, and CoraFull~\citep{cora_full}. To generate the synthetic dataset, we use the popular contextual stochastic block model (cSBM)~\citep{cSBM} (further details of the cSBM can be found in Appendix~\ref{appendix:sbm}). The experimental set-up follows the widely adopted task-continual-learning (task-CL)\citep{su2023towards,zhang2022cglb}, where a k-class classification task is extracted from the dataset for each training session. For example, in the OGB-Arxiv dataset, which has 40 classes, we divide them into 20 tasks: Task 1 is a 2-class classification task between classes 0 and 1, task 2 is between classes 2 and 3, and so on. In each task, the system only has access to the graph induced by the vertices at the current and earlier learning stages, following the formulation in Sec.\ref{ssec:preliminary}. A brief description of the datasets and how they are divided into different node classification tasks is given in Table~\ref{tab:data_description}. We adopt the implementation from the recent NGCL benchmark~\citep{zhang2022cglb} for creating the task-CL setting and closely following their set-up (such as the train/valid/test set split).

\paragraph{  Evaluation Metrics.} Let $r_{i,j}$ denote the performance (accuracy) on task $j$ after the model has been trained over a sequence of tasks from $1$ to $i$. Then, the forgetting of task $j$ after being trained over a sequence of tasks from $1$ to $i$ is measured by $r_{i,j}-r_{j,j}$.  We use the final average performance (FAP) $:= \sum_{j}^m r_{m,j}/{m}$ and the final average forgetting (FAF) $:= (\sum_{j}^m r_{m,j}-r_{j,j})/{m}$ to measure the overall effectiveness of an NGCL framework. These metrics measure the performance of the model after being presented with all the tasks. The order of the task permutation should not affect the performance of the experience reply with the two chosen metrics (further result and discussion on this regard can be found in Appendix~\ref{appendix:exp}

\paragraph{  Baselines.} We compare the performance of SEA-ER with the following experience replay baselines. All the baselines are defaulted for the standard replay strategy, i.e. Eq.~\ref{eq:re_learn} without $\beta$.  {\bf ER-IID-Random} updates each task with an experience buffer with vertices randomly selected from each task. {\bf ER-IID-Random} serves as a baseline for showing the effectiveness of different experience buffer selection strategies. We adopt state-of-the-art experience replay frameworks for GNNs for comparison. 
{\bf ER-rep}~\citep{cnc_er} selects an experience buffer based on the representations of the vertex. {\bf ER-deg}~\citep{ahrabian2021structure} selects an experience buffer selected based on the degree of vertices. {\bf ER-infl}~\citep{kim2022dygrain}  selects an experience buffer selected based on the influence score of the representations of the vertex. We adopt the implementation of the two baselines~\citep{ahrabian2021structure,cnc_er} from the benchmark paper~\citep{zhang2022cglb} and implement a version of \citep{dynamic_learning} based on our understanding, as the code is not open-sourced. {\bf SEA-ER w.o. sa} is our proposed experience replay selection strategy with the standard replay strategy, and {\bf SEA-ER} is our proposed experience replay selection strategy with learning objective Eq.~\ref{eq:re_learn}. The suffix {\bf w. sa} indicate the usage of Eq.~\ref{eq:re_learn}. In addition to the above frameworks, we also include two natural baselines for NGCL: the {\bf Bare model} and {\bf Joint Training}. The {\bf Bare model} denotes the backbone GNN without any continual learning techniques, serving as the lower bound on continual learning performance. On the other hand, {\bf Joint Training} trains the backbone GNN on all tasks simultaneously, resulting in no forgetting problems and serving as the upper bound for continual learning performance.

\begin{figure}[t!]
  \hfill
\subfigure[FAF of cSBM]{  
   \includegraphics[width=.315\textwidth]{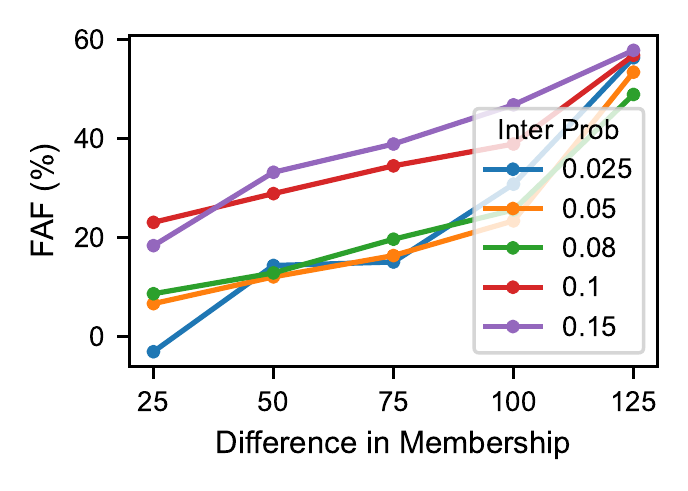}
  \label{fig:cSBM_dependency}}
  \hfill
\subfigure[Distortion]{
\includegraphics[width=.315\textwidth]{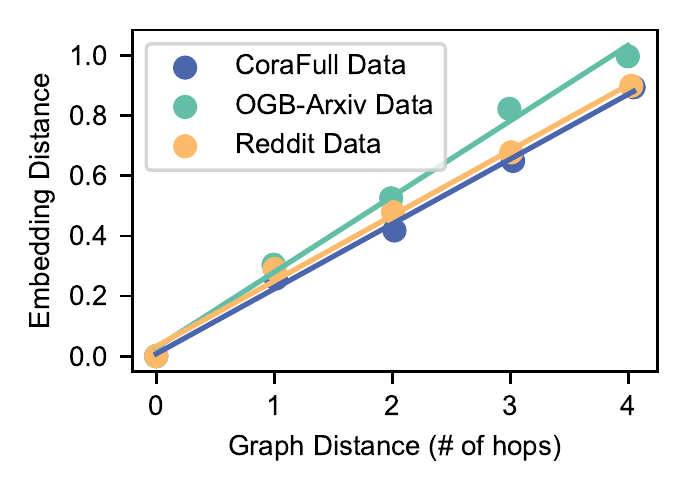}
  \label{fig:distortion}
}
\hfill
\subfigure[Performance vs.~buffer size  ]{
\includegraphics[width=.315\textwidth]{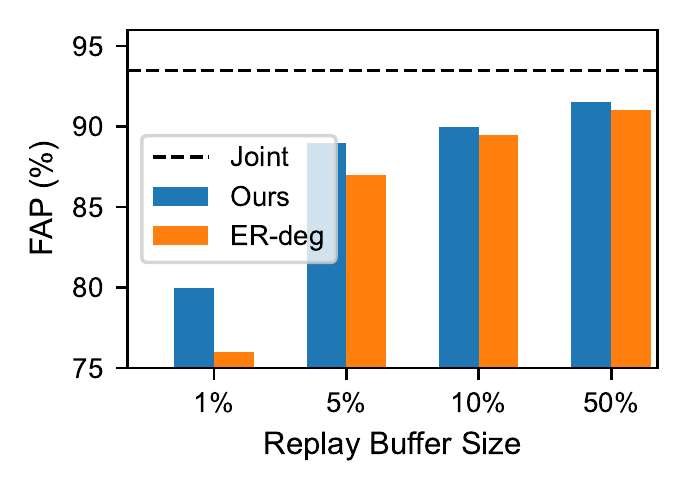}
\label{fig:buffer_size}
}
  \hfill
  \caption{Fig.~\ref{fig:cSBM_dependency} is the experiments with different configurations of cSBM models. Fig.~\ref{fig:distortion} is the distortion between graph distance and embedding distance. Fig.~\ref{fig:buffer_size} is the experiment on the effect of replay buffer size (without the structural alignment).}
\label{fig:diff}
% \vspace{-0.2cm}
\end{figure}

\subsection{Summary of the Empirical Results}
\paragraph{ Impact of Structural Shift.}  We first show the effect of evolving graph structure (structural shift) on the continual learning system. We do so by comparing the dynamic of forgetting (change of $r_{i,j} - r_{i,i}$ for different $i,j$) of the Bare model. The result of the OGB-Arxiv dataset is illustrated in Fig.~\ref{fig:structural_dependency}. In Fig.~\ref{fig:structural_dependency}, $T_1$ and $T_2$ on the x-axis denote the first and the second tasks (training sessions), respectively, and $T_2-$ represents the state when Task 2 has arrived but the model has not been trained on Task 2 data yet. As shown in the figure, in the case of an evolving graph, the catastrophic forgetting on Task 1 occurs twice when entering stage 2: when vertices of Task 2 are added to the graph (structural shift) and when the model is trained on Task 2's data. Fig.~\ref{fig:forget_dynamic1} and Fig.~\ref{fig:forget_dynamic2} are the complete catastrophic forgetting matrix (with $r_{i,j}-r_{i,i}$ as entries) for the two settings. The colour density difference of the two performance matrices illustrates that the existing tasks suffer more catastrophic forgetting in the setting 
 with evolving graphs. The results of this experiment align with our expectations, indicating that the existence of structural shifts can lead to a larger catastrophic forgetting.

We then further quantify the relation between the structural shift and the performance of a continual learning system. We use cSBM to create a two-session NGCL with a two-community graph of the size $300$ for each training session (task), and the learning objective is to differentiate and classify vertices between the two communities.  The community membership configuration is described by a four-tuple, $(c_0^{(1)},c_1^{(1)},c_0^{(2)},c_1^{(2)}) = \big ( 150+ \delta,150-\delta,150-\delta,150+\delta \big ),$ where $c_i^{(j)}$ is the number of vertices from community $i$ at session $j$. Then, we vary the structural shift by changing (1) the inter-connectivity $p_{\mathrm{inter}}$ (probability of an edge between tasks/sessions) and (2) the difference ($\delta$) of each community member in different stages. Intuitively, as we increase $\delta$ and the inter-connectivity $p_{\mathrm{inter}}$, the structural shift between the two training sessions increases and therefore we should observe a larger FAF converging to random guess as predicted by the theoretical results. As shown in Fig.~\ref{fig:cSBM_dependency}, this is indeed the case and therefore validating Theorem~\ref{thm:divergence_necessity}.

{\bf Distortion Rate.} We next validate that the distortion rate of GNNs is indeed small. We train a vanilla GraphSAGE model using the default setting for each dataset. We extract vertices within the 5-hop neighbourhood of the training set and group the vertices based on their distances to the training set. We then compute the average embedding distance of each group to the training set in the embedding space. In Fig.~\ref{fig:distortion}, the distortion rate and scaling factor of the GNN are reflected by the slope of the near-linear relation between the embedding distance and the graph distance. We can see that the distortion factors in the performance bound in Proposition~\ref{prop:performance_guarantee} are indeed small (closed to 1), validating our proposed experience selection strategy.

 {\bf Effectiveness of SEA-ER.} In Table ~\ref{tab:replay}, we show the FAP and FAF of each method after learning the entire task sequence. On average, the Bare model without any continual learning technique performs the worst, and Joint training performs the best. FAF is inapplicable to joint-trained models because they do not follow the continual learning setting and are simultaneously trained on all tasks. Our experience buffer selection strategy, SEA-ER w.o. sa, outperforms ER-rep, ER-infl and ER-deg in the proposed setting. In addition, the modified learning objective Eq.~\ref{eq:re_learn} with importance re-weighting can boost the performance of all experience replay frameworks by adjusting the structural shifts induced by the evolving graph structure. 

{\bf Ablation Study on the Size of Experience Replay Buffer.} As illustrated in Fig.~\ref{fig:buffer_size}, the performance of SEA-ER-re, predictably, converges to the Joint Training setting as the size of the experience replay buffer increases. Nonetheless, the size of the experience replay buffer is often subject to system constraints, such as the available storage capacity. The experiment demonstrates that SEA-ER is highly effective in preserving the performance of the system (as indicated by the final average task performance, FAP), even when the size of the experience replay buffer is limited.

\section{Conclusion}\label{sec:conclusion_discussion}
This paper offers a comprehensive theoretical exploration of the learnability of GNNs in NGCL within the context of evolving graphs. Our formal analysis reveals a crucial insight: GNNs may not be always learnable in NGCL when structural shifts are uncontrolled. Furthermore, we introduce a novel experience replay framework, SEA-ER, adept at addressing the challenges of catastrophic forgetting and structural shifts inherent in this problem. Empirical results validate our theoretical findings and underscore the effectiveness of our proposed approach.

\subsection{Limitation and Future Work}
The lower bound of catastrophic forgetting for GNNs in NGCL presented in this paper is framed with the existence of a learning algorithm. While it offers insight into the extent of structural shifts that can induce catastrophic forgetting of GNNs, establishing a more quantitative relationship between structural shifts and catastrophic forgetting of GNNs in NGCL would be a valuable avenue for further research. This could enhance our understanding of GNNs in NGCL and facilitate the development of improved methods. 

Our study here primarily investigates widely-used GNN architectures, including GCN, GAT, and SAGE. Nevertheless, in NGCL scenarios characterized by pronounced heterophily, it becomes imperative to explore heterophily-specific GNN models, which may employ distinct aggregation and update mechanisms. Such models could fundamentally alter the approach to addressing the problem. This highlights a potential avenue for significant future research within the domain of graph continual learning. However, it falls outside the purview of our current study.

\section*{Acknowledgement}
We would like to thank the anonymous reviewers and area chairs for their helpful comments. JS and CW are supported in part by Hong Kong RGC grants 17207621, 17203522 and C7004-22G (CRF). DZ is supported by NSFC 62306252 and central fund from HKU IDS.

\bibliography{reference}
\bibliographystyle{collas2024_conference}

\newpage
\appendix
\section{Further Discussion on Learnability and Proof of the Theorem~\ref{thm:divergence_necessity}}\label{appendix:connection_proof}
 In this appendix, we provide a proof for Theorem~\ref{thm:divergence_necessity}. The key idea of the proof is to establish a formal connection between the NGCL-2 problem with the domain adaptation problem (henceforth we refer to it as the DA problem). We begin by giving a brief introduction to the domain adaptation (DA) problem. Without loss of generality, we focus the analysis on binary classification tasks. Obviously, if binary classification is impossible, then an arbitrary k-way classification is also impossible.

\subsection{Domain Adaptation}
Let $\mathcal{X}$ and $\mathcal{Y} = \{0,1\}$ be the input and output space, and $l:\mathcal{X} \mapsto \mathcal{Y}$ be the labelling function. In the DA problem, there are two different distributions over $\mathcal{X}$, the source distribution $\mathcal{S}$ and the target distribution $\mathcal{T}$. The goal of domain adaptation is to learn a classifier $h : \mathcal{X} \mapsto \mathcal{Y}$ that performs well on the target distribution, i.e., $R_{\mathcal{T}}^l(h)$ is small, with the knowledge from the source domain $\mathcal{S}$. Similar to the NGCL-2 problem, the learnability of the DA problem can be defined as follow.

% \begin{definition}[DA-learnability]\label{def:da_learnability}
% Let $\mathcal{S}$ and $\mathcal{T}$ be distribution over $\mathcal{X}$, $\mathcal{H}$ a hypothesis class, $\epsilon > 0, \delta > 0$. We say that the DA problem is learnable if there exists a learning algorithm that produces a classifier $h$ from $\mathcal{H}$ with probability at least $1-\delta$, and $h$ incurs an error smaller than $\epsilon$, i.e., $Pr[R_{\mathcal{T}}^l(h) \leq \epsilon] \geq 1 - \delta$.
% \end{definition}

\begin{definition}[DA-learnability]\label{def:da_learnability}
Let $\mathcal{S}$ and $\mathcal{T}$ be distribution over $\mathcal{X}$, $\mathcal{H}$ a hypothesis class, $\epsilon > 0, \delta > 0$. We say that the DA problem is learnable if there exists a learning algorithm that produces a classifier $h$ from $\mathcal{H}$ with probability at least $1-\delta$, and $h$ incurs an error smaller than $\epsilon$, i.e., $Pr[R_{\mathcal{T}}^l(h) \leq \epsilon] \geq 1 - \delta$, when given access to labelled samples $L$ of size $m$ from $\mathcal{S}$ and unlabelled samples $U$ of size $n$ from $\mathcal{T}$.
\end{definition}

\subsection{Connection between DA and NGCL-2}
Recall that The NGCL-2 problem is characterized by three distributions: $\prob(\mathbf{y}, \mathbf{g}_{v}|\vertexSet_1, \graphStruct_{\task_1})$, $\prob(\mathbf{y}, \mathbf{g}_{v}|\vertexSet_1, \graphStruct_{\task_2})$, and $\prob(\mathbf{y}, \mathbf{g}_{v}|\vertexSet_2, \graphStruct_{\task_2})$. To simplify the notation, we re-write $\prob(\mathbf{y}, \mathbf{g}_{v}|\vertexSet_1, \graphStruct_{\task_1})$, $\prob(\mathbf{y}, \mathbf{g}_{v}|\vertexSet_1, \graphStruct_{\task_2})$, $\prob(\mathbf{y}, \mathbf{g}_{v}|\vertexSet_2, \graphStruct_{\task_2})$ as $\prob_1^{(1)}, \prob_1^{(2)}, \prob_2^{(2)}$. Then, the learnability of NGCL-2 problem can be formally defined as,
\begin{definition}[NGCL-2-learnability]\label{def:NGCL2_learnability}
Given a NGCL-2 with distributions $\prob_1^{(1)},\prob_1^{(2)},\prob_2^{(2)}$ and a labelling function $l$, let $\mathcal{H}$ a hypothesis class, $\epsilon > 0, \delta > 0$. We say that the NGCL-2 problem is learnable relative to $\mathcal{H}$ , if there exists a learning algorithm that outputs a classifier $h$ from $\mathcal{H}$ with probability of at least $1-\delta$ and $h$  has error less than $\epsilon$, i.e., $Pr[R_{\prob}^l(h) \leq \epsilon] \geq 1 - \delta$, when given access to labeled samples $L$ from $\prob$ ($\prob$ is the mixed distribution of $\prob_1^{(1)}$ and $\prob_2^{(2)}$) of size $m+n$ and unlabelled samples $U$ of size $k$ from $\prob_1^{(2)}$
\end{definition}

To establish a formal connection between NGCL-2 and DA, we consider a relaxed version of the NGCL-2 problem where the learning algorithm allow to access the same amount of data in an arbitrary order from distribution $\prob_1^{(1)}$ and $\prob_2^{(2)}$ and only needs to perform well on the updated distribution $\prob_1^{(2)}$. We refer to this relaxed version of NGCL-2 problem as NGCL-2-weak and its learnability is defined as follows.

\begin{definition}[NGCL-2-weak-learnability]\label{def:NGCL2_weak_learnability}
Given a NGCL-2 with distributions $\prob_1^{(1)},\prob_1^{(2)},\prob_2^{(2)}$ and a labelling function $l$, let $\mathcal{H}$ a hypothesis class, $\epsilon > 0, \delta > 0$. We say that the NGCL-2 problem is learnable relative to $\mathcal{H}$ , if there exists a learning algorithm that outputs a classifier $h$ from $\mathcal{H}$ with probability of at least $1-\delta$ and $h$  has error less than $\epsilon$, i.e., $Pr[R_{\prob_1^{(2)}}^l(h) \leq \epsilon] \geq 1 - \delta$, when given access to labeled samples $L$ from $P$ ($P$ is the mixed distribution of $\prob_1^{(1)}$ and $\prob_2^{(2)}$) of size $m+n$ and unlabelled samples $U$ of size $k$ from $\prob_1^{(2)}$
\end{definition}

It is obvious that the NGCL-2-weak-learnability definition subsume the NGCL-2-learnability. In other words, we can obtain the following corollary immediately from the definition.

\begin{corollary}\label{cor:weak_connection}
 If the NGCL-2-weak problem is not learnable, then the NGCL-2 problem is not learnable.
\end{corollary}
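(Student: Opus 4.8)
The plan is to prove the corollary through its contrapositive: I will show that NGCL-2-learnability (Definition~\ref{def:NGCL2_learnability}) implies NGCL-2-weak-learnability (Definition~\ref{def:NGCL2_weak_learnability}), so that unlearnability of the weak problem forces unlearnability of the original. The guiding observation is that NGCL-2-weak is obtained from NGCL-2 by loosening it in exactly two ways: the learner may consume the same samples in an arbitrary order rather than in the prescribed sequential order, and it is only required to perform well on the single shifted distribution $\prob_1^{(2)}$ rather than on the full continual objective that also encodes retention on the earlier tasks. Since both modifications can only make the learner's task easier, learnability should transfer from the harder problem to the easier one, and the corollary is the logically equivalent restatement of this transfer.

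Concretely, suppose NGCL-2 is learnable and fix a witnessing algorithm $\mathcal{A}$, which for every $\epsilon,\delta>0$ returns, with probability at least $1-\delta$, a hypothesis $h\in\mathcal{H}$ whose risk on the NGCL-2 evaluation objective is at most $\epsilon$. I would build an NGCL-2-weak learner $\mathcal{A}'$ that receives the same inputs, namely labeled samples $L$ of size $m+n$ from the mixture of $\prob_1^{(1)}$ and $\prob_2^{(2)}$ together with unlabeled samples $U$ of size $k$ from $\prob_1^{(2)}$, and simply feeds them to $\mathcal{A}$. Because NGCL-2-weak places no constraint on the ordering of the samples, $\mathcal{A}'$ is free to present them in whatever sequential arrangement $\mathcal{A}$ expects; the relaxation on access order therefore never obstructs the simulation, and the success probability $1-\delta$ is inherited verbatim.

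The remaining step is to transfer the accuracy guarantee to the weak target $\prob_1^{(2)}$. Here I would use that the continual objective controlled by $\mathcal{A}$ already dominates the retention risk on $\prob_1^{(2)}$: decomposing that objective into its nonnegative per-distribution components, a bound of $\epsilon$ on the whole forces a bound of at most $\epsilon/w$ on the $\prob_1^{(2)}$-component, where $w>0$ is its weight. Invoking $\mathcal{A}$ with target accuracy $w\epsilon$, which is permissible since NGCL-2-learnability is quantified over all positive accuracies, therefore yields $R_{\prob_1^{(2)}}^{l}(h)\le\epsilon$ on the same high-probability event, which is precisely the requirement of Definition~\ref{def:NGCL2_weak_learnability}. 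Thus $\mathcal{A}'$ witnesses NGCL-2-weak-learnability, and passing to the contrapositive establishes Corollary~\ref{cor:weak_connection}.

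I expect the only genuine obstacle to be notational rather than mathematical: one must state explicitly how $\prob_1^{(2)}$ sits inside the NGCL-2 objective so that the decomposition step is legitimate, and confirm that the constant-factor rescaling of $\epsilon$ is harmless because learnability quantifies over every $\epsilon>0$. Beyond the nonnegativity of the loss, no probabilistic machinery is needed, which is consistent with the statement being recorded as an immediate corollary of the definitions.
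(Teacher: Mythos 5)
Your proof is correct in substance and takes essentially the same route as the paper, which simply asserts that the weak variant subsumes the original: the contrapositive, the simulation argument for the relaxed sample-access order, and the monotonicity of the success criterion are exactly what the paper's one-line justification is gesturing at. The one thing worth saying is that the step you dismiss as ``notational'' is in fact the only substantive one, and it does not go through under the literal text of Definition~\ref{def:NGCL2_learnability}: there the NGCL-2 risk is measured on $\prob$, the mixture of $\prob_1^{(1)}$ and $\prob_2^{(2)}$, and $\prob_1^{(2)}$ is \emph{not} a component of that mixture, so a bound on $R^l_{\prob}(h)$ gives no control whatsoever on $R^l_{\prob_1^{(2)}}(h)$ and your weight-$w$ decomposition has nothing to decompose. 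The implication holds only under the intended reading --- consistent with the catastrophic-forgetting objective in Equation~\ref{eq:cf} and with the conclusion of Theorem~\ref{thm:divergence_necessity}, both of which are stated in terms of the retention risk on $\prob_1^{(2)}$ --- in which the NGCL-2 success criterion includes a nonnegatively weighted term $R^l_{\prob_1^{(2)}}(h)$; once that is made explicit, your argument (and the paper's) is complete, so you should state that corrected objective rather than treat it as a bookkeeping detail.
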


Next, we establish a formal connection between the DA problem and the NGCL-2-weak problem.

\begin{theorem}[Connection of DA and NGCL-2-weak] \label{thm:connection}
Let $\mathcal{X}$ be a domain and $\mathcal{H}$ the hypothesis class on $\mathcal{X} \times \{0,1\}$. Suppose an instance of the DA problem with distributions $\mathcal{S}$ and $\mathcal{T}$ on $\mathcal{X}$, and there is access to $m'$ labelled samples from $\mathcal{S}$ and $k'$ unlabelled samples from $\mathcal{T}$. Similarly, suppose an instance of an NGCL-2-weak problem with distributions $\prob_1^{(1)},\prob_1^{(2)},\prob_2^{(2)}$ over $\mathcal{X}$,and there is access to $m+n$ labelled samples from $\prob_1^{(1)},\prob_1^{(2)}$ and $k$ unlabelled samples from $\prob_1^{(2)}$.  For the same hypothesis sapce $\mathcal{H}$, assuming that $m'=m+n, k'=k$, $\lambda_{\mathcal{H}}(\{\mathcal{S},\mathcal{T}\}) = \lambda_{\mathcal{H}}(\{P, \prob_1^{(2)}\})$ and $d_{\mathcal{H}\Delta \mathcal{H}}(\mathcal{S},\mathcal{T}) = d_{\mathcal{H}\Delta \mathcal{H}}(P, \prob_1^{(2)})$, then we have that the DA is learnable  if and only if the NGCL-2-weak problem is learnable.
\end{theorem}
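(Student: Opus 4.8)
The plan is to prove the equivalence by exhibiting NGCL-2-weak as an instance of the domain adaptation problem and then arguing that, under the stated matching of sample budgets and of the two structural quantities $\lambda_{\mathcal{H}}$ and $d_{\mathcal{H}\Delta\mathcal{H}}$, the two problems are indistinguishable from the standpoint of learnability. Concretely, I identify the NGCL-2-weak \emph{source} with the mixture $P$ of $\prob_1^{(1)}$ and $\prob_2^{(2)}$, and the NGCL-2-weak \emph{target} with $\prob_1^{(2)}$. Comparing Definition~\ref{def:da_learnability} and Definition~\ref{def:NGCL2_weak_learnability} side by side, both ask a learner to output $h\in\mathcal{H}$ from (i) labelled draws from a source distribution and (ii) unlabelled draws from a target distribution, and in both the success criterion is $\Pr[R^{l}_{\mathrm{target}}(h)\le\epsilon]\ge 1-\delta$. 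So the two learning problems share the same input/output interface once this identification is fixed.

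First I would make the interface match exactly. The NGCL-2-weak learner receives $m+n$ labelled points whose labels are fixed by the common labelling function $l$ and which are distributed according to $P$ (the $m$ points under $\prob_1^{(1)}$ together with the $n$ points under $\prob_2^{(2)}$ constituting a sample from the mixture), together with $k$ unlabelled points from $\prob_1^{(2)}$; by the hypotheses $m'=m+n$ and $k'=k$ this is precisely the sample budget a DA learner for the pair $(\,P,\prob_1^{(2)}\,)$ consumes. Hence any algorithm solving the DA problem on $(P,\prob_1^{(2)})$ solves NGCL-2-weak verbatim, and conversely. This already yields the equivalence whenever the DA instance under consideration is literally $(P,\prob_1^{(2)})$.

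The substantive content is to upgrade this to an equivalence with a \emph{different} DA instance $(\mathcal{S},\mathcal{T})$ that merely shares the two matched quantities. For this I would invoke the characterisation of DA learnability in terms of $\lambda_{\mathcal{H}}$ and $d_{\mathcal{H}\Delta\mathcal{H}}$ together with the sample sizes: the achievability direction follows from the classical bound $R^{l}_{\mathcal{T}}(h)\le R^{l}_{\mathcal{S}}(h)+\tfrac12 d_{\mathcal{H}\Delta\mathcal{H}}(\mathcal{S},\mathcal{T})+\lambda_{\mathcal{H}}(\{\mathcal{S},\mathcal{T}\})$, so that risk minimisation on the source sample succeeds once these quantities are small and the budgets suffice, while the converse (non-learnability when they are uncontrolled) is exactly the impossibility construction of \citep{impossibility_expressive_divergence}. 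Because learnability in both frameworks is governed solely by the labelled budget, the unlabelled budget, and the pair $(\lambda_{\mathcal{H}},d_{\mathcal{H}\Delta\mathcal{H}})$, and these coincide for $(\mathcal{S},\mathcal{T})$ and $(P,\prob_1^{(2)})$ by assumption, the learnability verdict for the DA instance and for NGCL-2-weak must agree, giving both directions of the ``if and only if''.

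The main obstacle I anticipate is making the reduction sampling-faithful: I must argue that pooling $m$ labelled draws from $\prob_1^{(1)}$ with $n$ from $\prob_2^{(2)}$ is admissible as a source sample from the mixture $P$ (fixing the mixture weights to $m/(m+n)$ and $n/(m+n)$ and checking that the learner never needs to know which component produced a given point), and that the single labelling function $l$ is consistent across $\prob_1^{(1)},\prob_1^{(2)},\prob_2^{(2)}$ so the DA realisability assumptions carry over. A secondary care point is that the equivalence is transported across two genuinely distinct instances, so the argument must lean on the fact that $\lambda_{\mathcal{H}}$ and $d_{\mathcal{H}\Delta\mathcal{H}}$ are \emph{complete} invariants for DA learnability at fixed budgets; verifying that no finer feature of $(\mathcal{S},\mathcal{T})$ enters the learnability verdict is where the impossibility theorem of \citep{impossibility_expressive_divergence} does the real work.
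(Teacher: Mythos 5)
Your first paragraph-and-a-half is precisely the paper's own proof: identify the NGCL-2-weak source with the mixture $P$ of $\prob_1^{(1)}$ and $\prob_2^{(2)}$ and the target with $\prob_1^{(2)}$, note that under $m'=m+n$ and $k'=k$ the two learners consume identical data and are judged by the identical success criterion, and conclude that an algorithm for one problem is verbatim an algorithm for the other. That is also all the paper establishes and all that is needed downstream: Corollary~\ref{cor:connection} and Theorem~\ref{thm:divergence_necessity} only ever invoke the correspondence in which the NGCL-2 instance is \emph{constructed} from the hard DA instance (so that $\mathcal{S}=P$ and $\mathcal{T}=\prob_1^{(2)}$ hold as identities of distributions, and the two invariant equalities are satisfied trivially). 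Your observation that one must fix the mixture weights and check that the learner need not know which component produced a point is a legitimate care point that the paper glosses over (a fixed $(m,n)$ split is not literally an i.i.d.\ sample from the mixture, whose component counts would be binomial), but it is a minor technicality rather than the crux.

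The second half of your plan, however, contains a genuine gap. To transport learnability to a \emph{different} DA instance $(\mathcal{S},\mathcal{T})$ that merely shares $\lambda_{\mathcal{H}}$, $d_{\mathcal{H}\Delta\mathcal{H}}$ and the sample budgets, you assert that these quantities are complete invariants of DA learnability, i.e.\ that no finer feature of the instance affects the verdict. Nothing you cite supports this. The bound $R^{l}_{\mathcal{T}}(h)\le R^{l}_{\mathcal{S}}(h)+\tfrac12 d_{\mathcal{H}\Delta\mathcal{H}}(\mathcal{S},\mathcal{T})+\lambda_{\mathcal{H}}(\{\mathcal{S},\mathcal{T}\})$ gives only a sufficient condition for success when the quantities are small, and the impossibility theorem of \citep{impossibility_expressive_divergence} is an existence statement (\emph{there exist} non-learnable instances with these quantities small), not a characterization; two instances with equal but non-negligible values of $\lambda_{\mathcal{H}}$ and $d_{\mathcal{H}\Delta\mathcal{H}}$ can perfectly well differ in learnability. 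So this step would fail as written. The fix is simply to drop it: read the theorem as asserting the equivalence for the explicitly constructed pair of instances (which is how it is stated in the paper's proof and how it is used), in which case your first argument already closes both directions of the ``if and only if.''
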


\begin{proof}
We first show that under the premise of the theorem if the NGCL-2-weak learnable, then the DA problem is learnable.

Suppose we are given an instance of DA problem of distributions $\mathcal{S}$, $\mathcal{T}$ and a $\mathcal{H}$. Let $\mathcal{S}$ be a mixed distribution of $\mathcal{S}_1$ and $\mathcal{S}_2$. Then, we can map the DA to a NGCL-2-weak problem as follows.
$$\mathcal{S}_1 = \prob_1^{(1)},\mathcal{S}_2 = \prob_1^{(2)},\mathcal{T} = \prob_1^{(2)}$$

Under the mapping above, we have $d_{\mathcal{H}\Delta \mathcal{H}}(\mathcal{S},\mathcal{T}) = d_{\mathcal{H}\Delta \mathcal{H}}(P,\prob_1^{(2)}) $ and $\lambda_{\mathcal{H}}(\{\mathcal{S},\mathcal{T}\}) = \lambda_{\mathcal{H}}(\{P, \prob_1^{(2)}\})$, where $P$ is the mixed distribution of distributions $\prob_1^{(1)}$ and $\prob_1^{(2)}$. 

By the assumption, the NGCL-2-weak problem is learnable. By Def.~\ref{def:NGCL2_weak_learnability}, this means that there exist a learning algorithm that takes in $m+n$ labelled data from $P$ and $k$ unlabelled data from $\prob_1^{(2)}$, and with probability at least $1-\delta$, outputs a classifier $h$ wiht at most $\epsilon$ error with respect to $\prob_1^{(2)}$. By the mapping above and the premise of the theorem that $m+n = m', k'=k$, the same learning algorithm would satisfy Def.~\ref{def:da_learnability}. This shows that the learnability of NGCL-2-weak implies the learnability of DA problem.

Now we show the opposite that  under the premise of the theorem if the DA problem is learnable, then the NGCL-2-weak learnable. 

Let's reverse the construction direction. Suppose we are given an instance of NGCL-2-weak problem of distributions $\prob_1^{(1)}$, $\prob_1^{(2)}$, $\prob_1^{(2)}$ and a $\mathcal{H}$. Let $\mathcal{P}$ be a mixed distribution of $\prob_1^{(1)}$ and $\prob_1^{(2)}$. Then, we can map the NGCL-2-weak problem to DA problem as follows.
$$P = \mathcal{S},\prob_1^{(2)} = \mathcal{T}$$

Similarly, under the mapping above, we have $d_{\mathcal{H}\Delta \mathcal{H}}(\mathcal{S},\mathcal{T}) = d_{\mathcal{H}\Delta \mathcal{H}}(P,\prob_1^{(2)}) $ and $\lambda_{\mathcal{H}}(\{\mathcal{S},\mathcal{T}\}) = \lambda_{\mathcal{H}}(\{P, \prob_1^{(2)}\})$.

By the assumption, the DA problem is learnable. By Def.~\ref{def:da_learnability}, this means that there exist a learning algorithm that takes in $m'$ labelled data from $\mathcal{S}$ and $k'$ unlabelled data from $\mathcal{T}$, and with probability at least $1-\delta$, outputs a classifier $h$ wiht at most $\epsilon$ error with respect to $\mathcal{T}$. By the mapping above and the premise of the theorem that $m+n = m', k'=k$, the same learning algorithm would satisfy Def.~\ref{def:NGCL2_weak_learnability}. This shows that the learnability of DA problem implies the learnability of the NGCL-2-weak problem.

This completes the proof of Theorem~\ref{thm:connection}.
\end{proof}

Now that we have shown that the learnability of DA problem is equivalent to the learnability of the NGCL-2-weak problem. Combining Theorem~\ref{thm:connection} and Corollary~\ref{cor:weak_connection}, we immediately obtain the following corollary on the connection between the NGCL-2 problem with Def.~\ref{def:NGCL2_learnability} and the domain adaptation problem.

\begin{corollary}[Connection of DA and NGCL-2] \label{cor:connection}
Let $\mathcal{X}$ be a domain and $\mathcal{H}$ the hypothesis class on $\mathcal{X} \times \{0,1\}$. Suppose an instance of the DA problem with distributions $\mathcal{S}$ and $\mathcal{T}$ on $\mathcal{X}$, and there is access to $m'$ labelled samples from $\mathcal{S}$ and $k'$ unlabelled samples from $\mathcal{T}$. Similarly, suppose an instance of a NGCL-2 problem with distributions $\prob_1^{(1)},\prob_1^{(2)},\prob_1^{(2)}$ over $\mathcal{X}$,and there is sequential access to $m$ labelled samples from $\prob_1^{(1)}$, $n$ labelled samples from $\prob_1^{(2)}$ and $k$ unlabelled samples from $\prob_1^{(2)}$.  For the same hypothesis sapce $\mathcal{H}$, assuming that $m'=m+n, k'=k$, $\lambda_{\mathcal{H}}(\{\mathcal{S},\mathcal{T}\}) = \lambda_{\mathcal{H}}(\{P, \prob_1^{(2)}\})$ and $d_{\mathcal{H}\Delta \mathcal{H}}(\mathcal{S},\mathcal{T}) = d_{\mathcal{H}\Delta \mathcal{H}}(P, \prob_1^{(2)})$, then we have that the if the DA is not learnable, then the NGCL-2 problem is not learnable.
\end{corollary}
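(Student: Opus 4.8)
The plan is to derive Corollary~\ref{cor:connection} as a two-step logical composition of the results already in hand, Theorem~\ref{thm:connection} and Corollary~\ref{cor:weak_connection}, so that no new quantitative estimate is needed: everything reduces to checking that the hypotheses match and then propagating non-learnability along the chain DA $\to$ NGCL-2-weak $\to$ NGCL-2.

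First I would verify that the stated premises are exactly those required to invoke Theorem~\ref{thm:connection}. The corollary assumes the sample-size matching $m' = m+n$ and $k'=k$, together with the two equalities $\lambda_{\mathcal{H}}(\{\mathcal{S},\mathcal{T}\}) = \lambda_{\mathcal{H}}(\{P, \prob_1^{(2)}\})$ and $d_{\mathcal{H}\Delta \mathcal{H}}(\mathcal{S},\mathcal{T}) = d_{\mathcal{H}\Delta \mathcal{H}}(P, \prob_1^{(2)})$, where $P$ is the mixture associated to the source $\mathcal{S}$ under the forward mapping used in the proof of Theorem~\ref{thm:connection}. These are precisely the hypotheses of that theorem for the DA instance $(\mathcal{S},\mathcal{T})$ and the NGCL-2-weak instance built on $\prob_1^{(1)},\prob_1^{(2)},\prob_2^{(2)}$. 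Since Theorem~\ref{thm:connection} gives the biconditional ``DA learnable $\iff$ NGCL-2-weak learnable'' under these hypotheses, its contrapositive yields: if the DA problem is not learnable in the sense of Def.~\ref{def:da_learnability}, then the NGCL-2-weak problem is not learnable in the sense of Def.~\ref{def:NGCL2_weak_learnability}.

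Second, I would apply Corollary~\ref{cor:weak_connection}. Because NGCL-2-weak is obtained from NGCL-2 only by relaxing the sequential data-access requirement to arbitrary-order access (while keeping the same sample budget and the same target distribution $\prob_1^{(2)}$), any admissible NGCL-2 learner is automatically an admissible NGCL-2-weak learner; hence non-learnability of the weak version forces non-learnability of the original, which is exactly Corollary~\ref{cor:weak_connection}. Composing the two implications gives DA not learnable $\Rightarrow$ NGCL-2-weak not learnable $\Rightarrow$ NGCL-2 not learnable, which is the claim.

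I do not expect a genuine obstacle here, since the corollary is a formal consequence of earlier results; the only step demanding care is the bookkeeping in the first paragraph, namely ensuring that the distribution identification used to construct the auxiliary NGCL-2-weak instance is the same one under which the $\lambda_{\mathcal{H}}$ and $d_{\mathcal{H}\Delta \mathcal{H}}$ equalities were assumed (source $\mathcal{S}$ matched to $P$, target $\mathcal{T}$ matched to $\prob_1^{(2)}$), and that the relaxation in Corollary~\ref{cor:weak_connection} is applied in the non-learnability direction rather than its converse.
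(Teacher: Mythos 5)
Your proposal is correct and follows exactly the paper's own route: the paper derives Corollary~\ref{cor:connection} by combining the biconditional of Theorem~\ref{thm:connection} (applied in the non-learnability direction) with Corollary~\ref{cor:weak_connection}, precisely the two-step composition DA $\Rightarrow$ NGCL-2-weak $\Rightarrow$ NGCL-2 that you describe. The hypothesis bookkeeping you flag (matching $\mathcal{S}$ to $P$ and $\mathcal{T}$ to $\prob_1^{(2)}$) is likewise the identification used in the paper's proof of Theorem~\ref{thm:connection}.
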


\subsection{Proof for Theorem~\ref{thm:divergence_necessity}}
Corollary~\ref{cor:connection} above establishes a formal connection between the DA problem and the NGCL-2 problem. Under the same proposed mapping, the infeasibility of DA implies the infeasibility of NGCL-2. DA is a well-studied problem with mature understanding on its learnability. 

\cite{impossibility_expressive_divergence} has shown that small divergent distance between source domain and target domain is a ``necessary'' condition for the success of domain adaptation. We restate the theorem below.

\begin{theorem}[~\citep{impossibility_expressive_divergence}]\label{thm:da_divergence}
Let $\mathcal{X}$ be some domain set, and $\mathcal{H}$ a class of functions over
$\mathcal{X} \times \{0,1\}$. For every $c > 0$ there exists probability distributions $\mathcal{S}, \mathcal{T}$ over $\mathcal{X}$ such that for every domain adaptation learner, every integers $m, n >$ 0, there exists a labeling function $l : \mathcal{X} \mapsto {0, 1}$ such that $\lambda_{\mathcal{H}}(P,Q) \leq c$ and the DA problem is not learnable.
\end{theorem}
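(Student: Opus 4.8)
Since this is the classical domain-adaptation impossibility result, the plan is to prove it by an \emph{indistinguishability} (adversarial-labeling) argument: I would construct a single pair $\mathcal{S},\mathcal{T}$ — chosen \emph{after} $\mathcal{H}$ and $c$ are fixed — together with two labeling functions that induce the same observable data for any learner, yet cannot both be fit well on the target. The guiding intuition is that disjoint source/target supports make the joint error $\lambda_{\mathcal{H}}$ vanish (so the premise $\lambda_{\mathcal{H}}\le c$ is trivially met) while simultaneously making $d_{\mathcal{H}\Delta\mathcal{H}}(\mathcal{S},\mathcal{T})$ maximal, which is exactly the regime the theorem targets.

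\textbf{Construction.} First I would pick two points $a,b\in\mathcal{X}$ that $\mathcal{H}$ shatters (available for any nontrivial $\mathcal{H}$, and legitimate because we choose $\mathcal{S},\mathcal{T}$ depending on $\mathcal{H}$), place all source mass on $a$ and all target mass on $b$, and define two labelings $l_0,l_1$ agreeing on the source ($l_0(a)=l_1(a)=0$) but disagreeing on the target ($l_0(b)=0$, $l_1(b)=1$). Because $\{a,b\}$ is shattered, for each $l\in\{l_0,l_1\}$ there is an $h\in\mathcal{H}$ matching $l$ on both points, giving $R_{\mathcal{S}}^{l}(h)=R_{\mathcal{T}}^{l}(h)=0$; hence $\lambda_{\mathcal{H}}(\{\mathcal{S},\mathcal{T}\})=0\le c$ under either labeling.

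\textbf{Key steps.} The core observation is that a learner's view is independent of the choice between $l_0$ and $l_1$: the $m$ labeled source draws are all $(a,0)$ in both worlds, and the $n$ unlabeled target draws are all $b$ regardless of labels, so the output hypothesis $h$ has the same distribution under $l_0$ and $l_1$. For any fixed $h$, since $h(b)\in\{0,1\}$ disagrees with exactly one of $l_0(b),l_1(b)$, we get $R_{\mathcal{T}}^{l_0}(h)+R_{\mathcal{T}}^{l_1}(h)=1$. Drawing the true labeling uniformly from $\{l_0,l_1\}$ independently of the sample, the target error is $R_{\mathcal{T}}^{l_0}(h)$ or $1-R_{\mathcal{T}}^{l_0}(h)$ with equal probability, so it is at least $1/2$ with probability at least $1/2$. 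Averaging forces some fixed $l\in\{l_0,l_1\}$ to satisfy $\Pr[R_{\mathcal{T}}^{l}(h)\ge 1/2]\ge 1/2$, contradicting Definition~\ref{def:da_learnability} for, say, $\epsilon=\delta=1/2$; hence the DA problem is not learnable.

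\textbf{Main obstacle.} The subtle part is reconciling the realizability premise with the impossibility conclusion: the two labelings must be \emph{observationally identical} yet \emph{far apart on the target}, while still admitting $\lambda_{\mathcal{H}}\le c$. Disjoint supports achieve this cleanly only when $\mathcal{H}$ can label the two regions independently, so the real work is selecting $a,b$ (or two regions) that $\mathcal{H}$ shatters and verifying the premise for a general nontrivial class; the same independence lets a hypothesis separate the two supports, so $d_{\mathcal{H}\Delta\mathcal{H}}(\mathcal{S},\mathcal{T})$ is maximal, exhibiting the small-$\lambda_{\mathcal{H}}$/large-divergence regime the theorem exposes. For a degenerate $\mathcal{H}$ one cannot have both small $\lambda_{\mathcal{H}}$ and large disagreement, which is why the statement implicitly requires sufficient expressiveness. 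Combined with Corollary~\ref{cor:connection}, this impossibility yields the informal Theorem~\ref{thm:divergence_necessity}.
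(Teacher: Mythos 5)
The paper does not actually prove Theorem~\ref{thm:da_divergence}: it is quoted from \citep{impossibility_expressive_divergence}, and the appendix explicitly defers to that reference for the proof. Your reconstruction is essentially the canonical argument behind the cited result --- an adversarial pair of labelings that are indistinguishable from everything the learner observes (labeled source draws and unlabeled target draws) yet disagree on the target, so that any output hypothesis fails on one of them with probability at least $1/2$, while shattering of $\{a,b\}$ keeps $\lambda_{\mathcal{H}}=0\leq c$ --- and the argument is sound, including the final averaging step that extracts a single fixed bad labeling $l$ for each learner and each $m,n$. The one point worth making explicit is the caveat you already flag: as literally stated, with no assumption on $\mathcal{H}$, the theorem fails for degenerate classes (e.g.\ a singleton $\mathcal{H}$, where any labeling with $\lambda_{\mathcal{H}}\leq c$ is handled by the trivial learner), so the mild richness condition your construction requires (shattering one point in each support) is genuinely needed and is present in the original reference but silently dropped in the paper's restatement, which also writes $\lambda_{\mathcal{H}}(P,Q)$ where $\lambda_{\mathcal{H}}(\{\mathcal{S},\mathcal{T}\})$ is meant.
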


The proof of Theorem~\ref{thm:da_divergence} can be found in ~\citep{impossibility_expressive_divergence}. Then the proof of Theorem~\ref{thm:divergence_necessity} follows immediately from Theorem~\ref{thm:da_divergence} and Corollary~\ref{cor:connection}.

\section{Expressiveness and Sample Complexity}\label{appendix:sample_expressiveness}
In this appendix, building upon the connection established in the previous appendix, we provide a further discussion of the sample complexity and expressiveness.

\begin{theorem}[Necessity of the Expressiveness of Hypothesis Space]\label{thm:express_necessity}
Let $\mathcal{X}$ be some domain set, and $\mathcal{H}$ be a class of functions over $\mathcal{X} \times \{0,1\}$ whose VC dimension is much smaller than $|\mathcal{X}|$. Then for every $c > 0$, there exists a NGCL-2 problem such that $d_{\mathcal{H}\Delta \mathcal{H}}(\prob,\prob_1^{(2)}) \leq c$, where $\prob$ is the mixed distribution of $\prob_1^{(1)}$ and $\prob_2^{(2)}$, and the NGCL-2 problem is not learnable with any amount of labelled data from $\prob_1^{(1)}, \prob_2^{(2)}$ and unlablled data from $\prob_1^{(2)}$.
\end{theorem}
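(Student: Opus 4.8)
The plan is to proceed exactly as in the proof of Theorem~\ref{thm:divergence_necessity}, reusing the reduction to domain adaptation (DA) furnished by Corollary~\ref{cor:connection}, but now invoking the \emph{expressiveness} impossibility result for DA rather than the divergence one. Corollary~\ref{cor:connection} tells us that, under the mapping $\mathcal{S} = \prob$ (the mixture of $\prob_1^{(1)}$ and $\prob_2^{(2)}$) and $\mathcal{T} = \prob_1^{(2)}$, which preserves both $d_{\mathcal{H}\Delta\mathcal{H}}$ and $\lambda_{\mathcal{H}}$, the non-learnability of the DA instance transfers verbatim to the non-learnability of the associated NGCL-2 instance. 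Hence it suffices to exhibit a DA instance over the ego-graph domain $\mathbb{G}$ with small $\mathcal{H}\Delta\mathcal{H}$-distance between source and target that is nonetheless not learnable because $\mathcal{H}$ is too weak.

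For the DA side I would invoke the companion impossibility theorem of \citep{impossibility_expressive_divergence}, which complements Theorem~\ref{thm:da_divergence}: when $\mathrm{VCdim}(\mathcal{H})$ is much smaller than $|\mathcal{X}|$, for every $c>0$ there exist distributions $\mathcal{S},\mathcal{T}$ and a labeling function $l$ with $d_{\mathcal{H}\Delta\mathcal{H}}(\mathcal{S},\mathcal{T}) \leq c$ for which no DA learner succeeds with any finite number of labeled source and unlabeled target samples. The mechanism I expect to exploit is the following. A Sauer--Shelah counting argument shows that a class of VC dimension $d \ll |\mathcal{X}|$ realizes only $O(N^{d})$ dichotomies on any $N$-point set, so an adversary can always choose a labeling $l$ that $\mathcal{H}$ fits poorly, forcing $\lambda_{\mathcal{H}}(\{\mathcal{S},\mathcal{T}\})$ to be bounded away from $0$; since small $\lambda_{\mathcal{H}}$ is, as noted in Section~\ref{sec:learnability}, a necessary condition for learnability, the target error of every hypothesis is bounded below independently of sample size. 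Simultaneously, because the same weak class cannot shatter the set separating the supports of $\mathcal{S}$ and $\mathcal{T}$, the distributions can be arranged so that every $A \in \mathcal{H}\Delta\mathcal{H}$ receives almost equal mass under $\mathcal{S}$ and $\mathcal{T}$, keeping $d_{\mathcal{H}\Delta\mathcal{H}}(\mathcal{S},\mathcal{T})\leq c$. Transporting this instance back through Corollary~\ref{cor:connection} then yields the claimed NGCL-2 instance with $d_{\mathcal{H}\Delta\mathcal{H}}(\prob,\prob_1^{(2)})\leq c$ that is not learnable.

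The main obstacle I anticipate is \emph{realizability}: the DA construction must correspond to a genuine NGCL-2 problem, i.e. the distributions $\prob_1^{(1)},\prob_1^{(2)},\prob_2^{(2)}$ must arise as ego-graph distributions $\prob(\mathbf{y},\ego{v}\mid\cdot)$ induced by an actual evolving graph and its vertex batches, and one must verify that the GNN-induced hypothesis class $\mathcal{H}$ over $\mathbb{G}\times\mathcal{Y}$ genuinely has VC dimension much smaller than $|\mathbb{G}|$. The latter is where the intuition ``$\mathbb{G}$ is enormous while GNN capacity is bounded'' must be made precise; I would handle it by taking $\mathbb{G}$ to be a sufficiently rich finite family of ego-graphs, so that $|\mathbb{G}|$ dwarfs the bounded VC dimension of the chosen GNN architecture, and by placing the source and target masses on disjoint structural patterns that $\mathcal{H}\Delta\mathcal{H}$ cannot separate, mirroring the adversarial labeling produced by the counting argument. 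Once realizability and the capacity gap are secured, the conclusion follows immediately from the cited DA theorem and Corollary~\ref{cor:connection}, exactly paralleling the closing step of the proof of Theorem~\ref{thm:divergence_necessity}.
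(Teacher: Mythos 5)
Your proposal follows essentially the same route as the paper: the paper's proof simply invokes the expressiveness impossibility result for domain adaptation from \citep{impossibility_expressive_divergence} (restated as Theorem~\ref{thm:da_expressive}) and transports it to NGCL-2 via Corollary~\ref{cor:connection}, exactly as you describe. Your additional remarks on the Sauer--Shelah counting mechanism and on realizing the DA distributions as genuine ego-graph distributions are elaborations beyond what the paper itself verifies (it applies the corollary directly without addressing realizability), but they do not alter the core argument.
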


Theorem~\ref{thm:express_necessity} formalizes the intuition that the hypothesis class (GNN model) needs to be expressive enough to capture different distributions in NGCL, for it to be learnable. 
%Theorem~\ref{thm:divergence_necessity} and Theorem~\ref{thm:express_necessity} together show that the EGCL problem is not always learnable and reveal two important (``necessary'') factors for the learnability of EGCL, 1) structural dependency of vertex batches and 2) expressive power of the hypothesis space.

\begin{theorem}[Sample Complexity]\label{thm:sample_complexity}
Let $\mathcal{X}$ be some compact domain set in $\mathbf{R}^d$, and $\mathcal{H}$ be a class of functions over $\mathcal{X} \times \{0,1\}$. Suppose the labeling function $l: \mathcal{X} \mapsto \{0,1\}$ is $\alpha$-Lipschitz continuous. Then,  if $m + n + k < \sqrt{(1- 2(\epsilon+\delta)) (\alpha + 1)^d}$, for every $c > 0$, there exists a NGCL-2 problem such that $\lambda_{\mathcal{H}}(\{\prob_1^{(1)}, \prob_2^{(2)}, \prob_1^{(2)}\}) \leq c,$ 
$d_{\mathcal{H}\Delta \mathcal{H}}(P,\prob_1^{(2)}) \leq c$, where $\prob$ is the mixed distribution of $\prob_1^{(1)}$ and $\prob_2^{(2)}$, and the NGCL-2 problem is not learnable.
\end{theorem}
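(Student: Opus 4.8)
The plan is to prove this by the same reduction-to-domain-adaptation strategy used for Theorem~\ref{thm:divergence_necessity}, but now isolating the \emph{sample size} as the sole obstruction to learnability rather than the divergence. Concretely, I would invoke Corollary~\ref{cor:connection}: it suffices to exhibit a DA instance with source $\mathcal{S}$ and target $\mathcal{T}$ on $\mathcal{X}$, accessed through $m' = m+n$ labelled and $k' = k$ unlabelled samples, that is not learnable, and whose matched NGCL-2 instance has small $\lambda_{\mathcal{H}}$ and small $d_{\mathcal{H}\Delta\mathcal{H}}(P,\prob_1^{(2)})$. To decouple the sample-complexity obstruction from the divergence and expressiveness obstructions, I would take $\mathcal{S}$ and $\mathcal{T}$ (equivalently the mixture $P$ and the target $\prob_1^{(2)}$) to have identical marginals on $\mathcal{X}$, so that $d_{\mathcal{H}\Delta\mathcal{H}}(P,\prob_1^{(2)}) \le c$ holds for free and the DA instance degenerates to ordinary supervised learning of an $\alpha$-Lipschitz labeller on a compact $\mathcal{X} \subset \mathbb{R}^d$ from $m'+k'=m+n+k$ examples.

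Second, I would build the hard family by a packing argument. Taking $\mathcal{X}=[0,1]^d$ without loss of generality, I partition it into $N \asymp (\alpha+1)^d$ cells of side length $\asymp 1/\alpha$ and construct a family $\{l_\sigma\}_{\sigma \in \{0,1\}^N}$ of $\alpha$-Lipschitz labelling functions that realize (almost) independent labels across cells, placing a constant label on the core of each cell and reserving a thin buffer between cells so the Lipschitz constraint is respected. Placing the evaluation distribution $\prob_1^{(2)}$ uniformly on the cell centres and letting $\mathcal{H}$ contain this family guarantees realizability, i.e.\ $\lambda_{\mathcal{H}}(\{\prob_1^{(1)},\prob_2^{(2)},\prob_1^{(2)}\}) \le c$, while the coincidence of marginals enforces $d_{\mathcal{H}\Delta\mathcal{H}}(P,\prob_1^{(2)}) \le c$; both side conditions of the statement are thus met by design.

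Third comes the impossibility argument in the style of No-Free-Lunch. Drawing the index $\sigma$ uniformly at random, any learner that observes $s=m+n+k$ examples sees at most $s$ of the $N$ cells, and on each unobserved cell its prediction is statistically independent of the true label, so the conditional error there is $1/2$. Averaging over $\sigma$ yields an expected risk at least $\tfrac{1}{2}(1-s/N)$, and by the averaging principle there is a fixed $\sigma^\ast$ whose target $l_{\sigma^\ast}$ forces $\mathbb{E}_S[R] \ge \tfrac{1}{2}(1-s/N)$. A reverse-Markov step, using boundedness of the loss, converts this into the high-probability failure statement $\Pr_S[R>\epsilon] > \delta$, which by Definition~\ref{def:da_learnability} certifies non-learnability of the DA instance once $s$ falls below the packing threshold; Corollary~\ref{cor:connection} then transfers this to the matched NGCL-2 instance.

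The main obstacle I anticipate is two-fold, and both issues sit in the construction rather than in the averaging argument. First, a genuinely $\{0,1\}$-valued Lipschitz function on a connected domain is necessarily constant, so the ``independent labels per cell'' must be engineered via buffer regions (or stated for the regression/soft-label version of the labeller), and one must verify that this buffering still leaves $\Theta((\alpha+1)^d)$ genuine degrees of freedom. Second, reconciling the clean \emph{linear} failure threshold that the averaging argument naturally produces with the \emph{square-root} form $\sqrt{(1-2(\epsilon+\delta))(\alpha+1)^d}$ in the statement requires careful bookkeeping; I expect the square root to arise from how many cells a size-$s$ sample can informatively cover under the buffered construction (a pair-/collision-type count comparing $s^2$ against $N$), and pinning down this constant, together with checking that the $(1-2(\epsilon+\delta))$ factor emerges from the reverse-Markov conversion, is the delicate quantitative step.
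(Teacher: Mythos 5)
Your reduction skeleton is exactly the paper's: the paper proves this theorem in two lines by restating the domain-adaptation sample-complexity lower bound of \citep{sample_complexity} (Theorem~\ref{thm:da_sample_complexity}) and composing it with Corollary~\ref{cor:connection}. Where you genuinely diverge is that you do not cite that DA result but instead sketch a direct proof of it via a $1/\alpha$-separated grid of $N=\Theta((\alpha+1)^d)$ points and a No-Free-Lunch averaging argument. That route is workable, and both of the obstacles you flag are resolvable more cleanly than you suggest. First, the Lipschitz/binary tension disappears once you support the marginal only on the separated grid: any $\{0,1\}$-labeling of points at pairwise distance at least $1/\alpha$ extends to an $\alpha$-Lipschitz function, so no buffer bookkeeping is needed and you retain all $N$ degrees of freedom. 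Second, the square root is not something you need to reproduce: your NFL-plus-reverse-Markov computation yields non-learnability whenever $s < (1-2(\epsilon+\delta))N$, which is a \emph{stronger} threshold that implies the stated $s < \sqrt{(1-2(\epsilon+\delta))N}$ (since $\sqrt{x}\le x$ for $x\ge 1$, and for $x<1$ the stated condition forces $s=0$, which your bound also covers). The square root in the cited theorem is an artifact of Ben-David--Urner's construction having to satisfy additional benignness conditions (covariate shift, bounded weight ratios) that Theorem~\ref{thm:sample_complexity} does not impose, so your degenerate identical-marginals instance is admissible here; your speculation about a collision-type count is a red herring for your own construction. The one point to be careful about, which your proof shares with the paper's citation-based proof, is the quantification over $\mathcal{H}$: you need $\mathcal{H}$ to (nearly) realize the grid labelings in order to get $\lambda_{\mathcal{H}}\le c$, whereas the theorem statement presents $\mathcal{H}$ as given; this should be made explicit as an assumption on $\mathcal{H}$ rather than a choice you make. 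Net effect: you prove a stronger statement with more work, at the cost of re-deriving a known result the paper simply imports.
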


Theorem~\ref{thm:sample_complexity} suggests that even if the two ``necessary'' conditions in Theorem~\ref{thm:divergence_necessity} and Theorem~\ref{thm:express_necessity} are satisfied, there is still a sampling requirement depending on the dimension of the input domain and the smoothness of the labelling function. This shows the importance of having access to previous data for NGCL.

\subsection{Proof for Theorem~\ref{thm:express_necessity} and Theorem~\ref{thm:sample_complexity}}
The idea of the proof is similar to the one presented earlier. ~\citep{impossibility_expressive_divergence} has shown that expressive power of the hypothesis space is a ``necessary'' condition for the success of domain adaptation. We restate the theorem below.

\begin{theorem}\label{thm:da_expressive}
Let $\mathcal{X}$ be some domain set, and $\mathcal{H}$ a class of functions over
$\mathcal{X} \times \{0,1\}$ whose VC dimension is much smaller than $|\mathcal{X}|$. Then for every $c > 0$ there exists probability distributions $Q, P$ over $\mathcal{X}$ such that for every domain adaptation learner, every integers $m, n >$ 0, there exists a labeling function $l : \mathcal{X} \mapsto {0, 1}$ such that $d_{\mathcal{H} \Delta \mathcal{H}}(P,Q) \leq c$ and the DA problem is not learnable.
\end{theorem}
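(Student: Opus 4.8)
\section*{Proof proposal for Theorem~\ref{thm:da_expressive}}

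The plan is to prove Theorem~\ref{thm:da_expressive} directly via a two-part argument in which the two hypotheses---small $\mathcal{H}\Delta\mathcal{H}$-divergence and small VC dimension relative to $|\mathcal{X}|$---play complementary roles. The guiding observation is that the theorem only asks for $d_{\mathcal{H}\Delta\mathcal{H}}(P,Q)\le c$ to be \emph{achievable}, with no constraint on the joint error $\lambda_{\mathcal{H}}$; this frees us to make the source and target supports \emph{disjoint}, which is exactly what renders the target labels unidentifiable from the available data. The only obstruction to disjoint supports is that disjointness normally forces the divergence to be maximal; the low-VC-dimension assumption is precisely what lets us smuggle a disjoint shift past $\mathcal{H}\Delta\mathcal{H}$ so that the divergence stays below $c$.

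First I would fix the distributions $P,Q$ independently of the learner, as the quantifier order demands. Let $d'=\mathrm{VCdim}(\mathcal{H}\Delta\mathcal{H})$, which is finite since $\mathrm{VCdim}(\mathcal{H})$ is. Because $|\mathcal{X}|$ is much larger than $d'$, I can select a pool $W\subseteq\mathcal{X}$ of $2K$ distinct points, form a uniformly random equipartition $W = S\sqcup T$ with $|S|=|T|=K$, and take $P,Q$ to be uniform on $S,T$. For any fixed $A\in\mathcal{H}\Delta\mathcal{H}$ we have $P(A)-Q(A)=\tfrac{1}{K}(|S\cap A|-|T\cap A|)$, which has mean zero over the random split and follows a hypergeometric law, so a Hoeffding-type bound gives $\Pr[\,|P(A)-Q(A)|>c/2\,]\le 2\exp(-\Omega(Kc^2))$. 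Since $P(A)$ and $Q(A)$ depend on $A$ only through $A\cap W$, the number of values to control is at most the growth function $\Pi_{\mathcal{H}\Delta\mathcal{H}}(2K)\le (2eK/d')^{d'}$ by Sauer--Shelah, i.e.\ polynomial in $K$. A union bound over these effective sets drops strictly below $1$ once $K$ is large enough, so a partition with $\sup_{A}|P(A)-Q(A)|\le c/2$, equivalently $d_{\mathcal{H}\Delta\mathcal{H}}(P,Q)\le c$, exists. This is the technical heart: balancing the polynomial growth function against the exponential concentration to certify a divergence-hiding disjoint split, with the ``$\mathrm{VCdim}\ll|\mathcal{X}|$'' hypothesis guaranteeing that $W$ fits.

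With $P,Q$ fixed, I would then run a no-free-lunch argument producing, for any DA learner and any $m,n>0$, a labeling $l$ that defeats learnability. The structural fact is that the learner's output $h\in\mathcal{H}$ depends only on the labeled source sample (drawn from $S$, labeled by $l|_S$) and the \emph{unlabeled} target sample (drawn from $T$); since $S\cap T=\emptyset$, the restriction $l|_T$ is never observed, so $h$ is stochastically independent of $l|_T$. Randomizing $l|_T$ by independent fair coins, for every fixed $h$ we get $\mathbb{E}_{l|_T}R_{Q}^{l}(h)=\tfrac1K\sum_{t\in T}\Pr[l(t)\neq h(t)]=\tfrac12$; averaging over the sample randomness and applying Fubini yields a fixed labeling $l^\ast$ (defined arbitrarily on $S$ and outside $W$) with $\mathbb{E}_{L,U}R_Q^{l^\ast}(h)\ge \tfrac12$. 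Since $R_Q^{l^\ast}(h)\in[0,1]$, a reverse-Markov estimate forces $\Pr_{L,U}[\,R_Q^{l^\ast}(h)\ge \tfrac14\,]\ge \tfrac13$, so for any $\epsilon<\tfrac14$ and $\delta<\tfrac13$ the requirement of Definition~\ref{def:da_learnability} fails. Crucially this holds for arbitrarily large $m,n$, because no quantity of source labels or unlabeled target points conveys information about $l|_T$.

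I expect the main obstacle to be the first step---certifying that a disjoint-support pair can have $\mathcal{H}\Delta\mathcal{H}$-divergence below $c$---since it is where the combinatorial (Sauer--Shelah) and probabilistic (hypergeometric concentration) ingredients must be combined and where the quantitative form of ``$\mathrm{VCdim}\ll|\mathcal{X}|$'' enters: one takes $K$ of order $d'c^{-2}$ up to logarithmic factors, and hence needs $|\mathcal{X}|\ge 2K$. The no-free-lunch step is comparatively routine. A minor point to verify is that the argument is insensitive to whether the learner is proper or improper, since the expectation computation uses only that $h$ is independent of $l|_T$, matching the generality of Definition~\ref{def:da_learnability}.
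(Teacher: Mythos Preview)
Your proposal is correct, but note that the paper itself does not prove Theorem~\ref{thm:da_expressive}: it simply states the result and writes ``The proof of Theorem~\ref{thm:da_expressive} can be found in~\citep{impossibility_expressive_divergence}.'' So there is no in-paper argument to compare against; the theorem is imported wholesale from Ben-David et al.

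That said, the direct proof you outline is essentially the construction used in the cited source. The two-step structure---(i) produce disjoint-support uniform distributions whose $\mathcal{H}\Delta\mathcal{H}$-divergence is below $c$ by combining Sauer--Shelah with hypergeometric concentration over a random equipartition, then (ii) run a no-free-lunch argument exploiting that the learner never observes target labels---is exactly the standard route. Your handling of the quantifier order (fixing $P,Q$ before the learner, choosing $l$ after), the reduction of the union bound to finitely many restrictions $A\cap W$, and the reverse-Markov step to extract constant failure probability are all sound. The only cosmetic point is that you may bound the growth function of $\mathcal{H}\Delta\mathcal{H}$ directly by $(\Pi_{\mathcal{H}}(2K))^2$ rather than routing through $d'=\mathrm{VCdim}(\mathcal{H}\Delta\mathcal{H})$, which avoids having to argue separately that $d'$ is finite; either way the bound is polynomial in $K$ and the argument goes through.
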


The proof of Theorem~\ref{thm:da_expressive} can be found in ~\citep{impossibility_expressive_divergence}. Then the proof of Theorem~\ref{thm:express_necessity} follows immediately from Theorem~\ref{thm:da_expressive} and Corollary~\ref{cor:connection}.

~\citep{sample_complexity} has shown that even if DA problem has small divergent distance and the hypothesis space is expressive enough, there is still a sampling requirement for the DA problem to be learnable. We restate the theorem below.

\begin{theorem}\label{thm:da_sample_complexity}
Let $\mathcal{X} \subset \mathbf{R}^d$ and $\mathcal{H}$ a class of functions over
$\mathcal{X} \times \{0,1\}$. Suppose the labelling function $l$ is $\alpha$-Lipschitz continuous. For every $c > 0$ there exist $\mathcal{S}, \mathcal{T}$ over $\mathcal{X}$ suc htat $ \lambda_{\mathcal{H}} \leq c$ and $d_{\mathcal{H}\Delta \mathcal{H}}(\mathcal{S},\mathcal{T}) \leq c$ and the DA problem is not learnable if $m + n \geq \sqrt{(\alpha + 1)^d(1-2(\epsilon+\delta))}$, where $m,n,\epsilon,\delta$ are given in Def.~\ref{def:da_learnability}.
\end{theorem}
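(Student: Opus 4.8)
The plan is to re-derive the cited domain-adaptation sample-complexity lower bound through a no-free-lunch (probabilistic-method) construction, choosing the pair $\mathcal{S},\mathcal{T}$ so that the two benign hypotheses $\lambda_{\mathcal{H}}\leq c$ and $d_{\mathcal{H}\Delta\mathcal{H}}(\mathcal{S},\mathcal{T})\leq c$ hold automatically, and concentrating all the difficulty in the adversarial choice of labelling function. Concretely, I would take $\mathcal{X}=[0,1]^d$ and set $\mathcal{S}=\mathcal{T}$ equal to the uniform distribution; this forces $d_{\mathcal{H}\Delta\mathcal{H}}(\mathcal{S},\mathcal{T})=0\leq c$ trivially, and—by selecting each target labelling to be realizable within $\mathcal{H}$—it makes $\lambda_{\mathcal{H}}(\{\mathcal{S},\mathcal{T}\})=0\leq c$ in the realizable case. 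The goal is to exhibit that even in this most favourable geometry (identical, perfectly aligned domains and an expressive class) a learner can still be defeated, isolating the sample count as the sole binding resource.

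The heart of the construction packs the cube with independently labellable cells. Because $l$ must be $\alpha$-Lipschitz and $\{0,1\}$-valued, any transition from label $0$ to label $1$ requires travelling a distance of order $1/\alpha$; I would therefore partition $[0,1]^d$ into a grid of $N$ axis-aligned cells of side $\sim 1/\alpha$, separated by thin transition buffers, with $N$ of order $(\alpha+1)^d$, on each of which the label may be fixed to $0$ or $1$ independently while preserving the global Lipschitz bound. This yields a family of roughly $2^{N}$ admissible labellings, each (for rich enough $\mathcal{H}$) realizable by some $h\in\mathcal{H}$, so that expressiveness and domain alignment are preserved uniformly across the whole family; the complementary small-VC regime is instead the content of Theorem~\ref{thm:da_expressive}.

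Next I would run the averaging argument. Fix an arbitrary domain-adaptation learner, draw the labelling function uniformly at random from the family, and hand the learner $m+n$ labelled draws. Each sample reveals the label of at most one cell, so at most $m+n$ cells are ever observed; on every unobserved cell the conditional label remains an unbiased coin independent of the learner's output, incurring expected error $\tfrac12$ there. Lower-bounding the expected target error by $\tfrac12$ times the unobserved mass and then converting the expectation over random labellings into the existence of a single bad $l$ (via Markov, absorbing the $\epsilon$ and $\delta$ slack as in Definition~\ref{def:da_learnability}) produces a quantitative sample threshold; the statement records the resulting regime as $m+n\geq\sqrt{(\alpha+1)^d\,(1-2(\epsilon+\delta))}$, in which no learner can guarantee $\Pr[R_{\mathcal{T}}^l(h)\leq\epsilon]\geq 1-\delta$, i.e.\ the DA problem is not learnable. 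Once this is in hand, the NGCL consequence follows by feeding it through the mapping of Corollary~\ref{cor:connection}.

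The main obstacle I anticipate is the quantitative bookkeeping needed to pin the threshold down in exactly this form. Two pieces are delicate: first, maintaining the $\alpha$-Lipschitz property globally while keeping the per-cell labels genuinely independent, so that the buffer widths and cell count reconcile to give $N$ of order $(\alpha+1)^d$; and second, tracking the coverage of cells by the $m+n$ samples—including collisions—tightly enough to recover both the square-root dependence and the factor $1-2(\epsilon+\delta)$ in precisely the orientation asserted by the statement, rather than a looser linear-in-$N$ estimate. This last reconciliation of the exact inequality with the averaging mechanism is the crux inherited from \citep{sample_complexity}, and it is where I would spend the bulk of the verification effort.
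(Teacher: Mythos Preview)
The paper does not actually prove this statement: its entire ``proof'' is the sentence ``The proof of Theorem~\ref{thm:da_sample_complexity} can be found in~\citep{sample_complexity}.'' The theorem is imported as a black box from the cited domain-adaptation literature and then combined with Corollary~\ref{cor:connection} to obtain Theorem~\ref{thm:sample_complexity}. So there is nothing in the paper to compare your construction against at the level of the DA lower bound itself.

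Your proposal goes well beyond what the paper does, attempting to reconstruct the argument that presumably lives in the cited reference. The packing-plus-averaging strategy you outline (grid of $\sim(\alpha+1)^d$ cells compatible with the Lipschitz constraint, uniform random labelling, Markov conversion) is the standard shape for such no-free-lunch lower bounds and is a plausible route. You are also right to flag the quantitative reconciliation as the delicate step: the inequality direction in the statement (``not learnable if $m+n \geq \ldots$'') is the opposite of what a no-free-lunch argument naturally produces (one normally shows failure when the sample budget is \emph{below} a threshold tied to the packing number), so either the statement as written has a typo inherited from the source, or there is a non-obvious mechanism you would need to uncover. That orientation issue is worth resolving before investing effort in the bookkeeping, since your sketch as it stands would yield a bound of the form ``not learnable if $m+n$ is small relative to $(\alpha+1)^d$,'' not large.
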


The proof of Theorem~\ref{thm:da_sample_complexity} can be found in ~\citep{sample_complexity}. Then the proof of Theorem~\ref{thm:sample_complexity} follows immediately from Theorem~\ref{thm:da_sample_complexity} and Corollary~\ref{cor:connection}.

\begin{remark}
In addition to the learnability studies, there exist a large body of studies investigating different techniques for domain adaptation. The formal connection~\ref{cor:connection} allow to rigorously connection the NGCL-2 problem and DA problem and to carefully transfer or explore the understanding and techniques from DA to NGCL-2. 
\end{remark}

% \begin{definition}[NGCL-2-weak-learnability]
% Given a NGCL-2-weak problem of two learning stages, $T_1, T_2$, let $\mathcal{H}$  a hypothesis class,  $\epsilon > 0, \delta > 0,$. We say that the NGCL-2-weak problem is $(\epsilon, \delta)$-learnable relative to $\mathcal{H}$, if when given access to labeled samples $L$ from $\prob1, \prob2$ and unlabelled samples $U$ from $\prob1’$, with probability of at least $1-\delta$, the learned classifier $h$ does not exceed the error of the best classifier in $\mathcal{H}$ by more than $\epsilon$, i.e.,
% $$Pr[R_{\prob1'}(h) \leq R_{\prob1'}(\mathcal{H}) + \epsilon] \geq 1 - \delta$$
% % \end{definition}
% Note that in the weak-learnability definition, the learning algorithm allow to access the same amount of data in an arbitrary order from distribution $\prob_1^{(1)}$ and $\prob_1^{(2)}$ and only needs to performance well on the updated distribution $\prob_1^{(2)}$.

% The proof of Theorem~\ref{thm:connection} is given in Appendix~\ref{thm:connection}. Theorem~\ref{thm:connection} is the main technical contribution of this paper. It establishes a formal connection of the NGCL-2 problem with the well-studied DA problem, which allows for transfer of the results and understanding of DA to the NGCL-2 problem. It is well-known that small $d_{\mathcal{H}\Delta \mathcal{H}}(DA)$ and $ \lambda_{\mathcal{H}}(DA)$ are ``necessory'' for learnability of the DA problem. The following theorems that similar conditions are ``necessary'' for the NGCL-2 problem.

\section{Proof of Proposition~\ref{prop:performance_guarantee}}\label{appendix:structural_relation}
In this appendix, we provide proof for Proposition~\ref{prop:performance_guarantee}. We begin with restating the definition of distortion. 

\begin{definition}[distortion rate]\label{def:distortion}
    Given two metric spaces $(\mathcal{Q},d)$ and $(\mathcal{Q'},d')$ and  
 	a mapping $\gamma: \mathcal{Q} \mapsto \mathcal{Q'}$, $\gamma$ is said to have a distortion $\alpha \geq 1$, if there exists a constant $r > 0$ such that $\forall u,v \in \mathcal{E}$, $ r d(u,v) \leq d'(\gamma(u),\gamma(v))\leq \alpha r d(u,v)$.
\end{definition}

\begin{proof}
Suppose $P_i \subset \vertexSet_i$ is the experience replay set selected by solving Eq.~\ref{eq:experience_buff}. Let $s$ be the structure of interest with distance measure $d_s$. Let $\gnnModel$ be a given GNN model with distortion $\alpha$ and scaling factor $r$ and its prediction function $g$.

Let $\sigma:\vertexSet_i \mapsto P_i$ denote a mapping that map a vertex $v$ from $\vertexSet_i$ to the closest vertex in $P_i$, i.e.,
\begin{equation}
     \sigma(v) = \argmin_{u \in P_i} d_{s}(u,v).
\end{equation}

Let's consider the loss of vertex $v$, $\loss(g \circ \gnnModel(v), y_v)$. By assumption, $\loss$ is smooth and let's denote $B_{\loss}^{up1}$ to be the upper bound for the first derivative with respect to $g \circ \gnnModel(v)$ and $B_{\loss}^{up2}$ to be the upper bound for the first derivative with respect to $y_v$. Then, consider the Taylor expansion of $\loss(g \circ \gnnModel(v), y_v)$ with respect to $\sigma(v)$, which is given as follows
\begin{equation}
\begin{split}
    \loss(g \circ \gnnModel(v), y_v) & \leq \loss(g \circ \gnnModel(\sigma(v)), y_{\sigma(v)}) + \\ & B_{\loss}^{up1}|| g \circ \gnnModel(v) - g \circ \gnnModel(\sigma(v))|| \\
    & + B_{\loss}^{up2}||y_v - y_{\sigma(v)}||
\end{split}
\end{equation}

Next, let's examine the inequality above term by term and start with $|| g \circ \gnnModel(v) - g \circ \gnnModel(\sigma(v))||$. Let's denote $h_v$ and $h_{\sigma(v)}$ the embedding for vertex $v$ and $\sigma(v)$. Then, we have that,
\begin{equation}
    || g \circ \gnnModel(v) - g \circ \gnnModel(\sigma(v))|| = || g(h_v) - g( h_{\sigma(v)})||
\end{equation}
By definition of distortion as given in Def.~\ref{def:distortion}, we have that,
\begin{equation}
    ||h_v - h_{\sigma(v)}|| \leq r \alpha d_s(v, \sigma(v))
\end{equation}
By assumption, the prediction function $g$ is smooth. Let $B_{g}$ denote the upper bound of first derivative of the prediction function $g$. Then, we have 
\begin{equation}
    || g(h_v) - g(h_{\sigma(v)})|| \leq B_g^{up} ||h_v - h_{\sigma(v)}||
\end{equation}
Substitute all these back, we have
\begin{equation}
     || g \circ \gnnModel(v) - g \circ \gnnModel(\sigma(v))|| \leq B_g^{up} r \alpha d_s(v, \sigma(v))
\end{equation}
Next, let's consider $||y_v - y_{\sigma(v)}||$. Similarly, by the data smoothness assumption and distortion, we have that 
\begin{equation}
    ||y_v - y_{\sigma(v)}|| \leq B_l ||h_v - h_{\sigma(v)}|| \leq B_l^{up} r \alpha d_s(v, \sigma(v))
\end{equation}

Substitute these back to the inequality we start with, we have that
\begin{equation}
    \begin{split}
           & \loss(g \circ \gnnModel(v), y_v)  \leq \loss(g \circ \gnnModel(\sigma(v)), y_{\sigma(v)}) + \\
           & \quad B_{\loss}^{up1}B_g^{up} r \alpha d_s(v, \sigma(v)) + B_{\loss}^{up2}B_l^{up} r \alpha d_s(v, \sigma(v)) \\
           & \leq \loss(g \circ \gnnModel(\sigma(v)), y_{\sigma(v)}) + \\                        
           &(B_l^{up}B_{\loss}^{up1}+B_g^{up}B_{\loss}^{up2}) r \alpha \max_{v' \in \vertexSet_i} d_s(v', \sigma(v'))\\
           & = \loss(g \circ \gnnModel(\sigma(v)), y_{\sigma(v)}) + \\ 
           &(B_l^{up}B_{\loss}^{up1}+B_g^{l}B_{\loss}^{up2}) r \alpha D_{s}(\vertexSet_i, P_i).
    \end{split}
\end{equation}
% \begin{equation}
%     \loss(g \circ \gnnModel(v), y_v) \leq \loss(g \circ \gnnModel(\sigma(v)), y_{\sigma(v)}) + B_{\loss}B_g r \alpha d_s(v, \sigma(v)) + B_{\loss}B_l r \alpha d_s(v, \sigma(v))
% \end{equation}

Since the inequality above holds for every vertex $v$, then we have 

\begin{equation}
    \begin{split}
         R^{\loss}(g \circ \gnnModel,\vertexSet_i) & \leq R^{\loss}(g \circ \gnnModel,P_i) + \\ 
         &(B_l^{up}B_{\loss}^{up1}+B_g^{up}B_{\loss}^{up2})  r \alpha D_{s}(\vertexSet_i, P_i)\\
         & = R^{\loss}(g \circ \gnnModel,P_i) + \mathcal{O} (\alpha  D_s(\vertexSet_i,P_i))
    \end{split}
\end{equation}

\end{proof}
\section{Additional Experimental Details}\label{appendix:exp}
In this appendix, we provide additional experimental results and include a detailed set-up of the experiments for reproducibility.

\subsection{Hardware and System}
All the experiments of this paper are conducted on the following machine

CPU: two Intel Xeon Gold 6230 2.1G, 20C/40T, 10.4GT/s, 27.5M Cache, Turbo, HT (125W) DDR4-2933

GPU: four NVIDIA Tesla V100 SXM2 32G GPU Accelerator for NV Link

Memory: 256GB (8 x 32GB) RDIMM, 3200MT/s, Dual Rank

OS: Ubuntu 18.04LTS

\subsection{Dataset and Processing}
\subsubsection{Dataset Description}
{\bf OGB-Arxiv.} The OGB-Arxiv dataset~\cite{ogb} is a benchmark dataset for node classification. It is constructed from the arXiv e-print repository, a popular platform for researchers to share their preprints. The graph structure is constructed by connecting papers that cite each other. The node features include the text of the paper's abstract, title, and its authors' names. Each node is assigned one of 40 classes, which corresponds to the paper's main subject area. 

{\bf Cora-Full.} The Cora-Full~\cite{cora_full} is a benchmark dataset for node classification. Similarly to OGB-Arxiv, it is a citation network consisting of 70 classes. 

{\bf Reddit.} The Reddit dataset~\cite{reddit} is a benchmark dataset for node classification that consists of posts and comments from the website Reddit.com. Each node represents a post or comment and each edge represents a reply relationship between posts or comments.

\begin{table}[h!]
  \centering
     \vspace{-3mm}
  \caption{Continual learning settings for each dataset.
  }  
  {\large
   \setlength\tabcolsep{4pt}
    \begin{tabular}{c|ccc}
    \hline
        Datasets  &        OGB-Arxiv  & Reddit & CoraFull          \\
            
    \hline
           \# vertices & 169,343 & 227,853 & 19,793\\
           \# edges   & 1,166,243 & 114,615,892 & 130,622 \\
           \# class & 40 & 40 & 70\\
    \hline
         \# tasks & 20 & 20 & 35\\
         \# vertices / \# task & 8,467 & 11,393 & 660\\
         \# edges / \# task & 58,312 & 5,730,794 & 4,354\\
    \hline
    \end{tabular}%
   }
   \vspace{-3mm}
  \label{tab:data_description2}%
\end{table}%

\subsubsection{License}
The datasets used in this paper are curated from existing public data sources and follow their licenses. OGB-Arxiv is licensed under Open Data Commons Attribution License (ODC-BY).  Cora-Full dataset and the Reddit dataset  are two datasets built from publicly available sources (public papers and Reddit posts) without a license attached by the authors.

\subsubsection{Data Processing}
For the datasets, we remove the 41-th class of Reddit-CL, following closely in ~\cite{zhang2022cglb}. This aims to ensure an even number of classes for each dataset to be divided into a sequence of 2-class tasks. For all the datasets, the train-validation-test splitting ratios
are 60\%, 20\%, and 20\%. The train-validation-test
splitting is obtained by random sampling, therefore the performance may be slightly different with splittings from different rounds of random sampling.

\subsection{Continual Training Procedure}
Suppose $\task_1, \task_2,...\task_m$ is the set of classification tasks created from the dataset. Then, the standard continual training procedure is given in Alg.~\ref{alg:it} and a graphical illustration is given in Fig.~\ref{fig:it}.

\subsection{Contextual Stochastic Block Model}\label{appendix:sbm}
We use the contextual stochastic block model~\citep{cSBM} to create a graph for each learning stage. For each graph, we create 300 vertices and assign them a community label $\{0,1\}$. We refer to a specific ratio of vertices from different communities as a configuration and the formula we use to vary the community ratio is as follows, configuration $i = (150+(i-1)*25,150-(i-1)*25,150-(i-1)*25,150+(i-1)*25)$. As $i$ increases, the difference in the ratio of the two learning stages increases. Intuitively, this should allow us to control the distribution difference between different learning stages.

Then we follow the standard stochastic block model. We use $p_{intra} = 0.15$ to randomly create edges among vertices of the same community and $p_{inter} = 0.1$ to randomly create edges among vertices of different communities. Then, we use $p_{stage} = [0.025,0.05,0.08,0.1,0.15]$ to create edges among vertices of different learning stages.

Then, the node features $x_v$ of each vertex is created based on the following formula,
$$x_v = \sqrt{\frac{\mu}{n}} y_v u + \frac{Z_v}{p},$$
where $y_v \in \{0,1\}$ is the community label of vertex $v$,  $u, Z_v \sim \mathcal{N}(0, I)$ are random Gaussian of dimension $p$, $n$ is the number of vertices in the graph and $\mu$ is a hyper-parameter of the model to control how separable of the features of the vertices from different communities. For our experiments, we use $\mu=5, n=300, p=500$ for the graph in each stage. 

\subsection{Graph Neural Network Details}
For our experiments, we use two popular GNN models, GCN~\citep{gcn} and GraphSage~\citep{sage}. For implementation, we use the default implementation provided by DGL for realizing these models. For all our experiments, we use two GNN layers followed by an additional readout function of an MLP layer. We use Sigmoid as the activation function for the last hidden layer and Relu for the rest of the layers. We only vary the hidden dimension of the GNN layer for each experiment.

\subsection{Difference between Inductive and Transductive}
In this subsection, we provide the additional results on the remaining dataset for the setting with evolving graphs and constant graphs and additional illustrations on the difference between transductive and inductive NGCL (with structural shift). The additional experimental results are reported in Fig.~\ref{fig:addition1} and ~\ref{fig:addition2}. The graphical illustration of the difference between transductive and inductive NGCL is given in Fig.~\ref{fig:it_diff}

\subsection{Permutation of Task Order}
In the empirical study, the task order is constructed once for each dataset. The reasons behind this are as follows. The focused continual learning strategy in this paper is experience replay, where the model is presented with (a small subset of) previous tasks regardless of their appearance order. Furthermore, the performance measures we used in the paper are FAP (Final Average Performance) and FAF (Final Average Forgetting).  These metrics measure the performance of the model after being presented with all the tasks. As such, the order of the task permutation might affect the intermediate learning dynamic but should not affect the performance of the experience reply with the two chosen metrics. This is further validated by Fig.~\ref{fig:permutation} shows the permutation of task order has minimal effect on the performance of experience replay on the chosen metrics.

\begin{figure}[!h]
    \centering
    \subfigure[Reddit, Forggeting, Evolving Graphs]{
    \includegraphics[width=.4\textwidth]{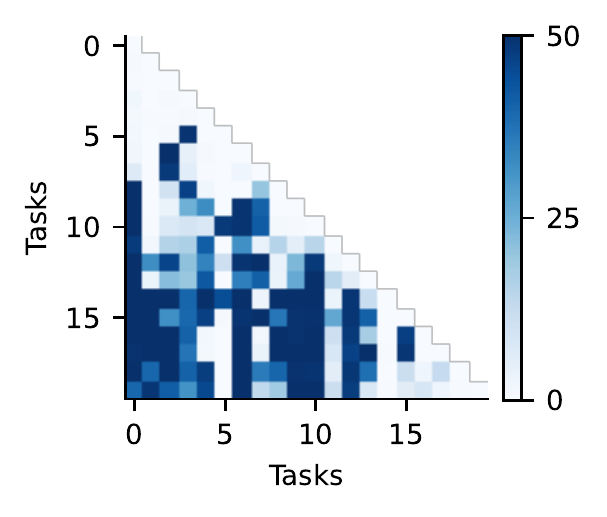}
    }
    \subfigure[Reddit, Forggeting, Constant Graphs]{
    \includegraphics[width=.4\textwidth]{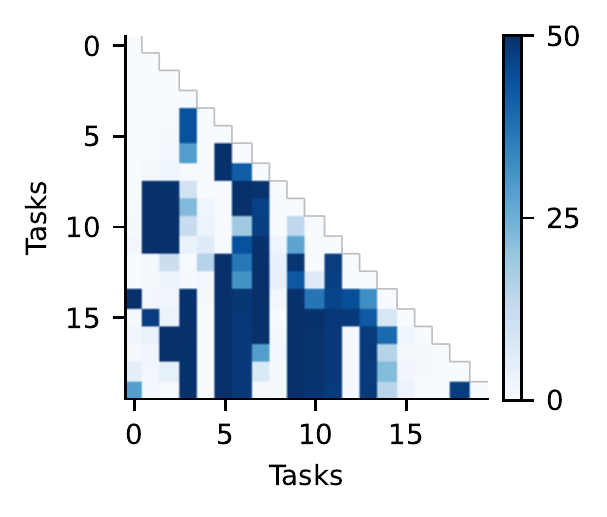}
    }
    \subfigure[CoraFull, Forggeting, Evolving Graphs]{
    \includegraphics[width=.4\textwidth]{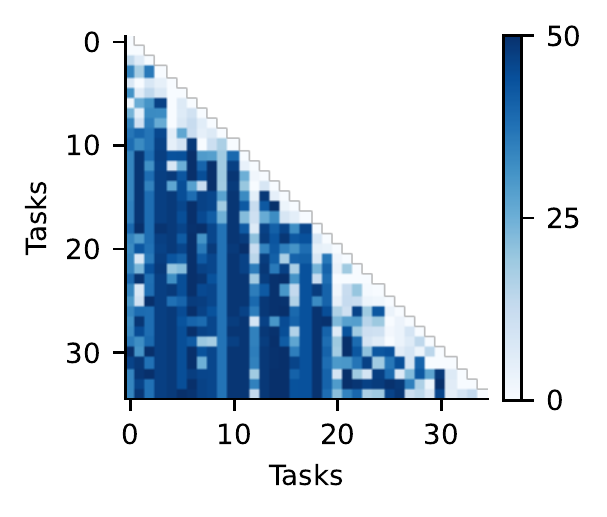}
    }
    \subfigure[CoraFull, Forggeting, Constant Graphs]{
    \includegraphics[width=.4\textwidth]{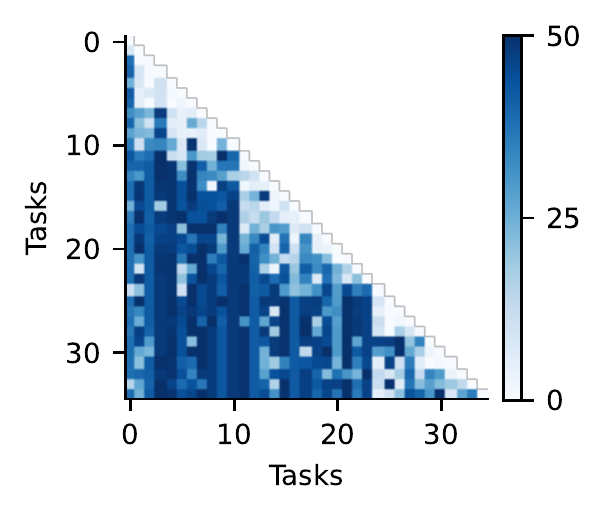}
    }
    \caption{Performance Matrix of Bare Model on Transductive and Inductive Setting on CoraFull and Reddit Datasets.}
    \label{fig:addition1}
\end{figure}

\begin{figure}[!h]
    \centering
    \subfigure[SEA-ER, Coral-Full]{
    \includegraphics[width=.45\textwidth]{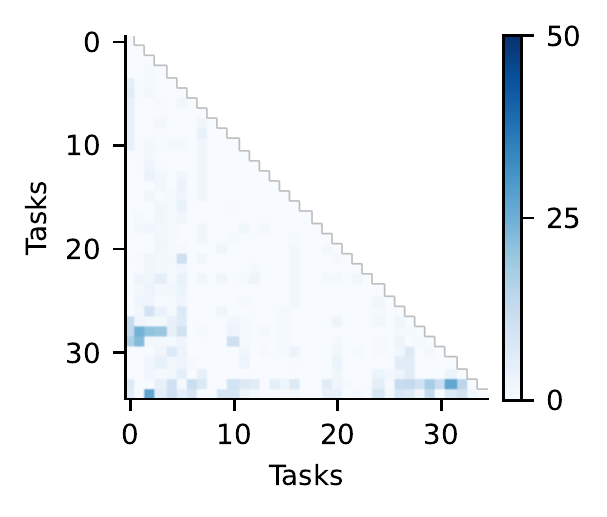}
    }
    \subfigure[SEA-ER, OGB-Arixv]{
    \includegraphics[width=.45\textwidth]{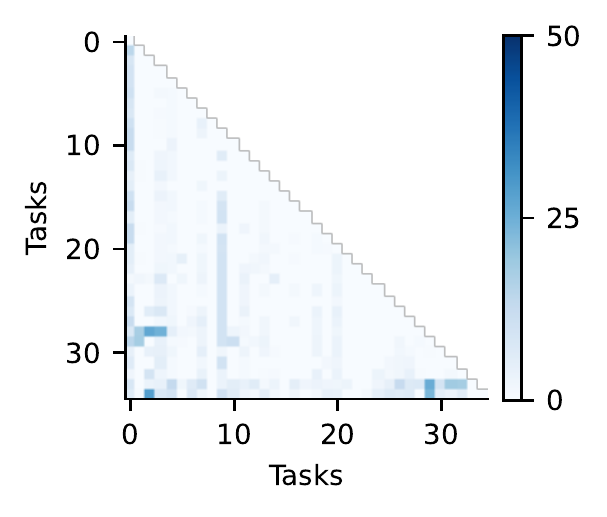}
    }
    \subfigure[SEA-ER, Reddit]{
    \includegraphics[width=.45\textwidth]{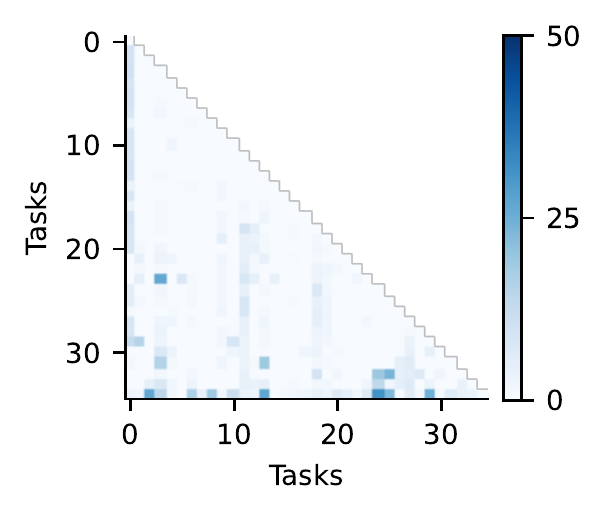}
    }
    \caption{Forgetting Performance Matrix of SEA-ER on CoraFull and Reddit Datasets.}
    \label{fig:addition2}
\end{figure}

\begin{figure}
    \centering
    \includegraphics[width=0.45\linewidth]{}
    \caption{FAP performance of our method with respect to different permutations. X-axis indicate different permutation 1-10. The figure shows that different permutations have a small effect (~1\%) on the performance.}
    \label{fig:permutation}
\end{figure}
% \begin{figure}[!h]
%     \centering
%     \includegraphics[width=.4\textwidth{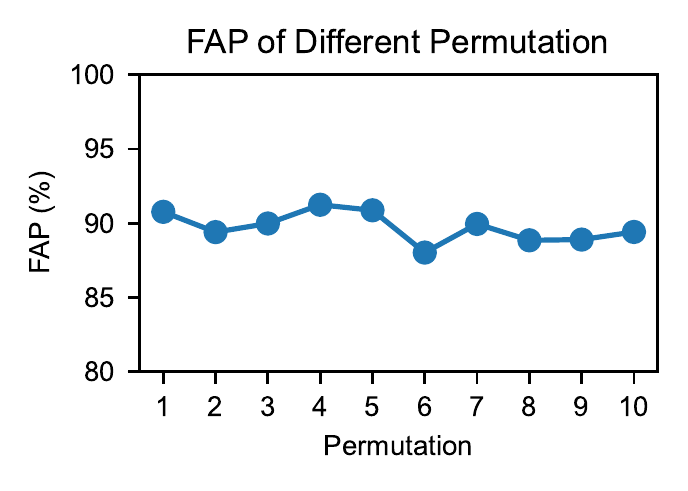}
%     \caption{Performance Matrix of Bare Model on Transductive and Inductive Setting on CoraFull and Reddit Datasets.}
%     \label{fig:permutation}
% \end{figure}

\begin{algorithm}[!h]
  \caption{Continual Training}
  \label{alg:it}
\begin{algorithmic}
  \STATE {\bfseries Input:} $\task_1, \task_2,...\task_m$ //classification task created from the dataset. 
  
  For $\tau$ in $\task_1, \task_2,...\task_m$:\\
  \begin{enumerate}
      \item update the graph structure with data from $\tau_i$
      \item get $V^{\mathrm{train}}_i$ for $\tau_i$
      \item update the existing GNN model and train a new task prediction head on $V^{\mathrm{train}}_i$
      \item if evaluation, evaluate the model on the current and previous task 
  \end{enumerate}
\end{algorithmic}
\end{algorithm}

\begin{figure}[!h]
\centering
\includegraphics[width=0.8\textwidth]{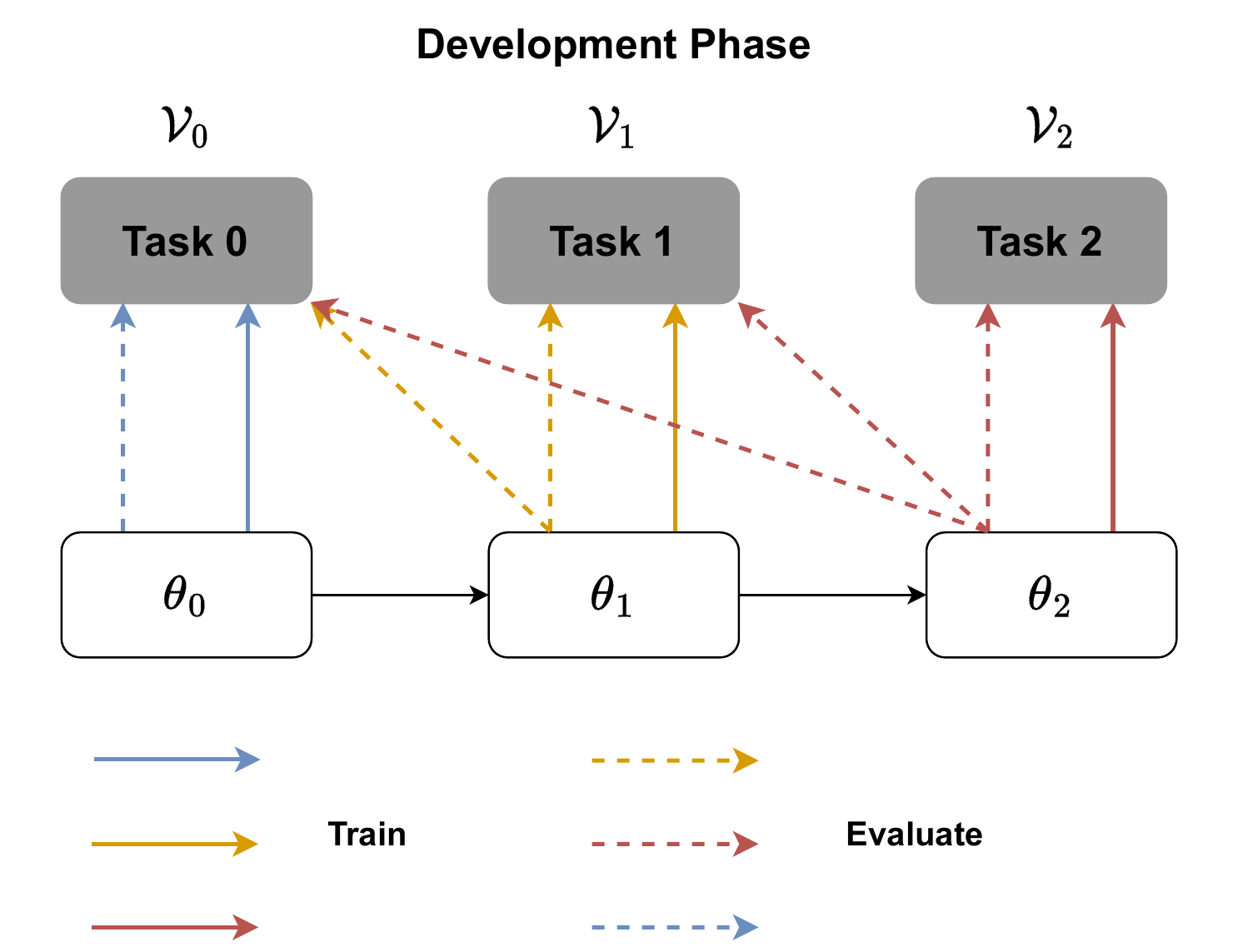}
\caption{Continual Training in Development Phase. The figure above illustrates what task data the model use for training and evaluation.
}
\label{fig:it}
\end{figure}

\begin{figure*}[!t]
\centering
\includegraphics[width=1.0\textwidth]{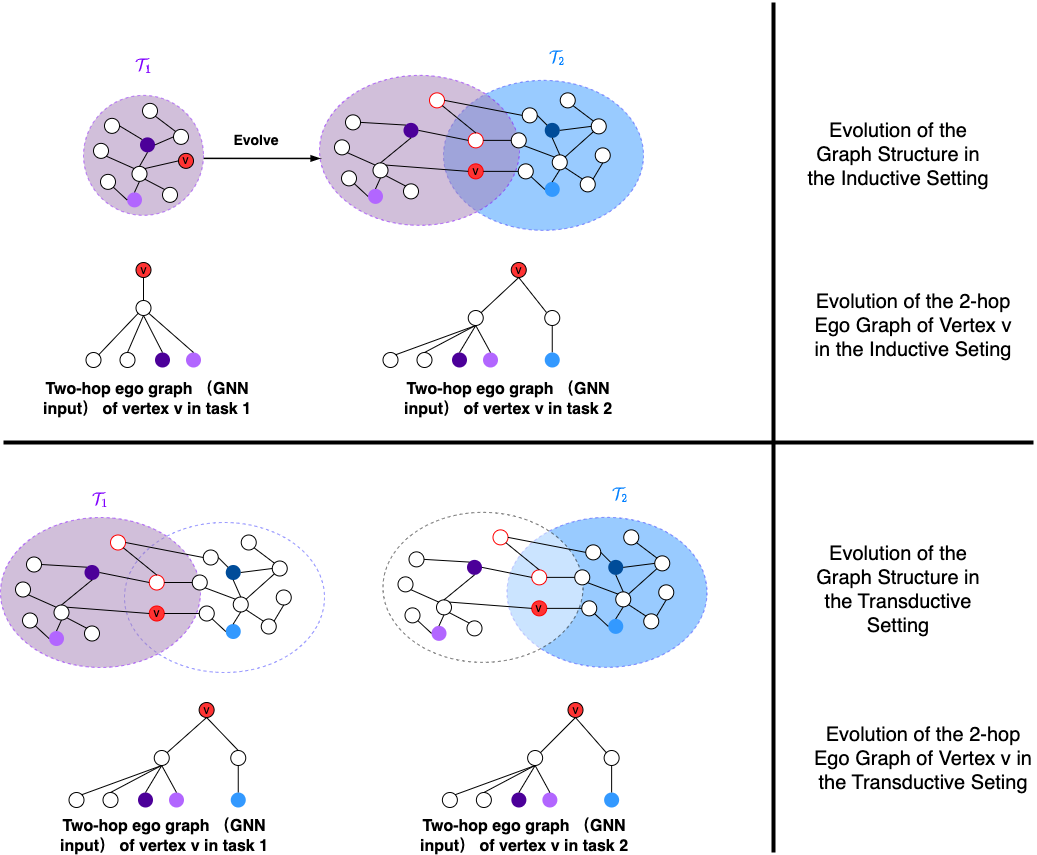}
\caption{Difference between Inductive (evolving graphs) GCL and Transductive (constant graphs) GCL. As illustrated in the top half of the figure, in the evolving graph setting, the appearing tasks expand/change the existing graph structure, resulting in changes in the neighborhood (input to the GNN) of some previous vertices. As discussed in the main body of the paper, this structural changes/dependency would cause distribution shift in the previously learnt information of the model. On the other hand, in the constant graph setting, the complete graph structure is available from the very first beginning. The neighborhoods of all the vertices are static/stay unchanged throughout different learning stages. Therefore, structural dependency is a unique challenge in the continual learning in evolving graphs.
}
\label{fig:it_diff}
\end{figure*}

\section{Complete Framework and Pesudo-code}\label{appendix:framework}

\subsection{Overall framework}
The continual learning procedure of our framework is summarized in Alg.~\ref{alg:train} and Fig.~\ref{fig:sail} provide a graphical illustration of the training of the framework on a two-stage continual learning.

\begin{algorithm}[!h]
  \caption{SEA-ER-GNN}
  \label{alg:train}
\begin{algorithmic}
  \STATE {\bfseries Input:} $\task_i$ //new node classification task\\
  \REQUIRE $\mathcal{P}_{i-1} = \{P_1,P_2,...,P_{i-1}\}$ //replay buffer  
  \REQUIRE $\gnnModel$ //current GNN model
  \REQUIRE $\mathcal{G}_{\task_{i-1}}$ //graph induced by vertices of $\task_{i-1}$ \\
  \REQUIRE $\beta_l, \beta_u$
   
  Create the training set for this learning stage
  $$\hat{\vertexSet}_i = \vertexSet^{\mathrm{train}} \cup \mathcal{P}_{i-1};$$

 Compute $\beta$ with Eq.~\ref{eq:kmm_weight}
  
  Run the training procedure (forward computation and backward propagation) with loss function 
\begin{equation*}
    \loss = \frac{1}{|\vertexSet^{\mathrm{train}}_i|} \sum_{v \in \vertexSet^{\mathrm{train}}_i} \loss(v) + \sum_{P_j \in \experienceBuffer_i}\frac{1}{|P_j|} \sum_{v \in P_j}\beta_v \loss(v),
\end{equation*}
  
  Update the replay buffer with replay set $P_i$ from $\tau_i$
  $$\mathcal{P}_i = \mathcal{P}_{i-1} \cup P_i ; $$
\end{algorithmic}
\end{algorithm}

\begin{figure}[!t]
\centering
\includegraphics[width=0.8\textwidth]{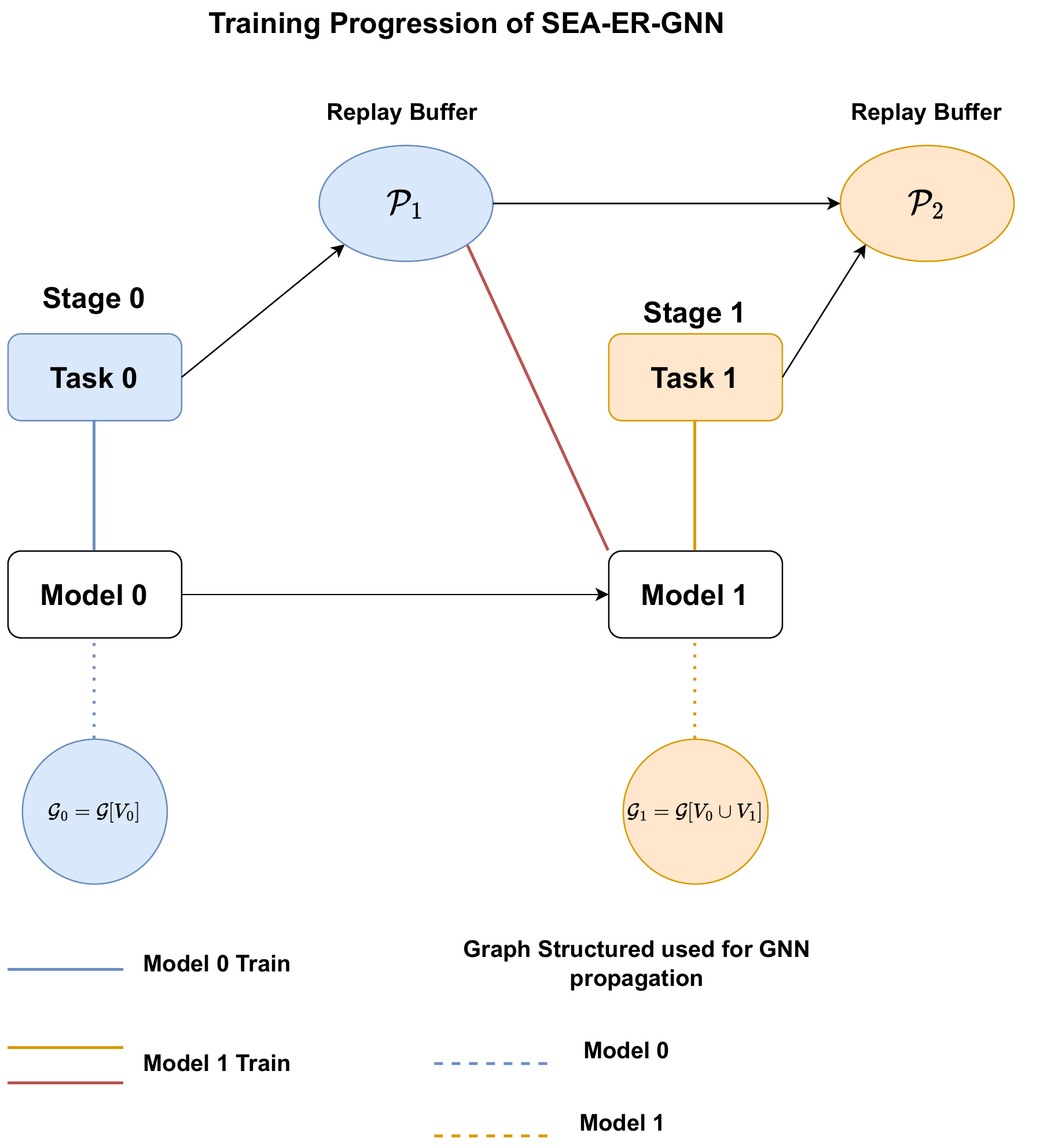}
\caption{Two-stage SEA-ER-GNN Training
}
\label{fig:sail}
\end{figure}

\subsection{Selecting Replay Set}
The replay set selection strategy is given in Eq.~\eqref{eq:experience_buff}. It is obvious that when $D_i = V_i$, the problem can be reduced to the k-center problem~\citep{approx} which is to select a set of k vertices so that the distance from the rest of the vertices to these selected k vertices is minimized. While the k-center problem is NP-hard, there exist efficient greedy approximation algorithms, e.g., by selecting a vertex that is furthest away from the established set~\citep{approx}.  However, such a heuristic might run into the problem of selecting vertices that are at the end of the long path and barely connect to other vertices. To mitigate this, we propose to weight the distance with the degree of the vertices. In addition, like other vertex sampling algorithms, we modify the above selection process by assigning a probability to each vertex based on the greedy criteria and its degree to give the algorithm some room for exploration.

\begin{algorithm}[!h]
  \caption{Replay Set}
  \label{alg:replay_buffer}
\begin{algorithmic}
  \STATE {\bfseries Input:} $D_i \subset V_i$ //labelled set from task i ($\tau_i$)\\
  \REQUIRE $\mathcal{G}_{i-1} = \mathcal{G}[\mathcal{V}_{i-1}]$ //existing graph\\
   
  Sample/select the replay set $P_i$ with the following procedures:
  \begin{enumerate}[leftmargin=.5in]
        \item initialize each vertice $v \in D_i$ with $p_v$ with its degree and sample an initial vertex $u$ based on a probability proportion to the degree (e.g., normalized each the degree of each vertex with the sum of total degree)
        \item assignment $u$ to the replay set $P_i = \{ u \}$
        \item compute the shortest-path distance between each vertex $v \in D_i \setminus P_i$ to $P_i$ with the formula $d(v, P_i) = \min \{d(v,u)| u \in P_i\}$
        \item compute the weight for each vertex $v \in D_i \setminus P_i$ based on product of the distance to $P_i$ and its degree, i.e., $p_v = \text{degree}(v)*d(v, P_i)$
      \item sample $v$ from $D_i$ based on probability proportion to $p_v$, e.g., normalized it with the sum from the remaining vertices of $D_i$
      \item repeat step 2-5 until $|P_i| = k$ 
  \end{enumerate}
\end{algorithmic}
\end{algorithm}

\subsection{Complexity Analysis}
The time complexity of our algorithm is readily apparent from its procedural steps, and it exhibits a computational complexity of O($N$), where $N$ denotes the number of vertices in the graph. Notably, our approach shares a common characteristic with baseline methods, where the problem formulation might inherently be NP-hard to solve. However, both our method and the baselines leverage efficient heuristics to obtain practical solutions. As a result, the time complexity of our proposed methods and the baseline approaches remains similar, with all falling within the order of O(N). This underscores the computational efficiency and feasibility of our methodology in addressing the challenges posed by the problem at hand.

\end{document}